\documentclass[letterpaper, 10 pt, conference]{ieeeconf}
\IEEEoverridecommandlockouts
\usepackage[hyphens]{url}
\usepackage{graphicx}
\usepackage{xcolor}
\makeatletter\let\NAT@parse\undefined\makeatother  
\usepackage[numbers,sort&compress]{natbib}
\usepackage{hyperref}
\usepackage{amsmath,amssymb,amsfonts}

\usepackage{amsthm}
\usepackage{mathtools}
\usepackage{bm}

\usepackage{algorithm}
\usepackage{algorithmic}

\let\labelindent\relax
\usepackage{enumitem}
\usepackage{booktabs}
\usepackage{multirow}

\newcommand{\R}{\mathbb{R}}
\newcommand{\cM}{\mathcal{M}}
\newcommand{\cG}{\mathcal{G}}
\newcommand{\cV}{\mathcal{V}}
\newcommand{\cH}{\mathcal{H}}

\newcommand{\diag}{\operatorname{Diag}}
\newcommand{\tr}{\operatorname{tr}}
\newcommand{\Had}{\circ}
\newcommand{\ip}[2]{\langle #1, #2 \rangle}

\theoremstyle{plain}
\newtheorem{proposition}{Proposition}

\theoremstyle{definition}
\newtheorem{definition}{Definition}
\theoremstyle{remark}
\newtheorem{remark}{Remark}

\newcommand{\BMorig}[1]{{#1}}

\title{\LARGE \bf Riemannian Optimization for\\Hadamard Products of Low-Rank Matrices}
\author{Pratik Jawanpuria$^{1}$, Ankish Chandresh$^{1}$, and Bamdev Mishra$^{2}$%
\thanks{$^{1}$Centre for Machine Intelligence and Data Science, Indian Institute of Technology Bombay, India {\tt\small pratik.jawanpuria@iitb.ac.in}, {\tt\small ankish.chandresh@iitb.ac.in}}%
\thanks{$^{2}$Microsoft India {\tt\small bamdevm@microsoft.com}}%
}

\begin{document}
\maketitle
\thispagestyle{empty}
\pagestyle{empty}

\begin{abstract}
The elementwise Hadamard product of two low-rank matrices provides a parameter-efficient model for data with multiplicative structure, but its modeling is challenging due to the presence of additional symmetries under coupled row/column scalings between the two factors. In order to leverage the geometry of the space, we formulate the learning of such matrices as optimization on a Riemannian quotient manifold. We propose a novel block-diagonal Riemannian metric derived from the pullback of the Frobenius inner product. The metric is shown to be invariant under these symmetries. We develop a Riemannian gradient descent algorithm that uses a tuning-free Gauss--Newton step size and scales linearly in the number of observed entries per iteration. The versatile framework of Riemannian quotient optimization enables both first-order and second-order Riemannian methods, the latter through a closed-form connection and the Riemannian Hessian. Experiments on real and synthetic datasets illustrate the efficacy of our proposed Riemannian approach.

\end{abstract}

\section{Introduction}
\label{sec:intro}

Low-rank matrix models are central to modern machine learning, underpinning everything from classical recommender systems~\cite{koren2009matrix} to representation learning~\cite{mikolov2013distributed} and the parameter-efficient fine-tuning of large-scale models~\cite{hu2022lora}.
However, in many complex domains, the underlying data structure is best captured not by a single low-rank matrix, but by the elementwise (Hadamard) product of two low-rank factors~\cite{ciaperoni2024hadamard}. For instance, in collaborative filtering, one factor pair can model overall user/item activity levels while the other captures nuanced preference structures~\cite{koren2009matrix}. Similar decoupled interactions arise in link prediction -- where community structure is separated from node-level heterogeneity~\cite{hoff2002latent} -- and in count data modeling for single-cell genomics~\cite{townes2019glmpca} and topic models~\cite{lee1999nmf}. The same Hadamard product of low-rank factors also underlies recent parameter-efficient fine-tuning of large models, where it adapts weights more expressively than a single low-rank update at an equal parameter budget~\cite{singhal2025abba,yu2025boha}.
Formally, we consider the model matrix $W$ given by:
\begin{equation}\label{eq:hadamard_form}
  W = (GH^\top) \Had (UV^\top),
\end{equation}
where $G\in\R^{m\times r_1}$, $H\in\R^{n\times r_1}$, $U\in\R^{m\times r_2}$, $V\in\R^{n\times r_2}$, and $\Had$ denotes the Hadamard (elementwise) product.
\BMorig{For brevity, we write $X = GH^\top$ and $Y = UV^\top$ for the two low-rank factors, so that $W = X\Had Y$.}


The Hadamard product of a rank-$r_1$ and a rank-$r_2$ matrix can have rank as high as $r_1 r_2$~\cite{ciaperoni2024hadamard}. The notion of Hadamard rank makes this expressivity precise~\cite{antolini2025hadamardranks}. A standard factorization $W = LR^\top$ achieving this rank would require $(m{+}n)(r_1r_2)$ parameters, whereas the Hadamard form~\eqref{eq:hadamard_form} requires only $(m{+}n)(r_1{+}r_2)$, a reduction \BMorig{when $(r_1{-}1)(r_2{-}1)>1$}. 

However, this structural efficiency introduces multiple optimization challenges. The Hadamard product induces symmetries beyond the standard general linear (GL) reparameterization invariance of fixed-rank factorizations. Specifically, alongside the GL invariance of each factor pair, the product $W$ is invariant under coupled row and column scalings between the two factors. The additional degrees of freedom result in a challenging optimization landscape, especially for standard Euclidean gradient methods.\\



\noindent \textbf{The limits of existing solvers.} \citet{ciaperoni2024hadamard} proposed alternating gradient descent (AGD) for the model~\eqref{eq:hadamard_form}, but its Euclidean steps ignore the per-row scaling and the curved geometry of the factor space. \citet{wertz2024bcd} proposed block coordinate descent (BCD), which updates each factor by solving per-row normal equations in closed form. BCD is highly effective for least squares, yet for the non-quadratic losses central to robust statistics, logistic link prediction, and cross-entropy objectives its sub-problems have no closed form. AGD applies to any smooth loss but forfeits the curvature adaptation that makes BCD effective.




\noindent \textbf{Our approach based on Riemannian optimization.}
In this paper, we bridge this gap by elevating the problem of learning Hadamard product of low-rank matrices to an optimization problem on a Riemannian manifold. To this end, we propose a Riemannian quotient manifold that takes into account the invariances that exist naturally in the Hadamard product of low-rank matrices (\ref{eq:hadamard_form}).
Exploiting the Riemannian structure, we discuss the required ingredients to enable optimization \citep{boumal2023introduction, absilbook}. 
Concretely, our contributions are:
\begin{enumerate}[nosep]
\item \textbf{Geometric characterization:} We formally describe the quotient manifold structure of $W = (GH^\top)\Had(UV^\top)$, identifying both the GL reparameterization and the novel cross-Hadamard scaling symmetries (Section~\ref{sec:geometry}).
\item \textbf{A novel Riemannian metric:} We propose a block-diagonal Riemannian metric that is scale invariant (Section~\ref{sec:geometry}). We also discuss the relationship between our proposed approach and BCD's normal-equation matrices that are used in its updates~\cite{wertz2024bcd}.

\item \textbf{Riemannian gradient descent (RGD):} We derive an RGD algorithm equipped with a Gauss-Newton Cauchy step size that requires no step-size tuning, achieving an $O(Nr + (m{+}n)r^4)$ per-iteration cost, where $N$ is the number of observed entries (dataset size in learning problems) and $r = \max(r_1,r_2)$ ensuring linear scaling with $N$.

\item \textbf{Riemannian trust-region method (RTR):} From the same metric we derive a closed-form Levi--Civita connection, and Riemannian Hessian expression allowing to use Riemannian trust-region method (Sections~\ref{sec:rtr} and~\ref{sec:exp_secondorder}).

\item \textbf{Empirical validation:} We demonstrate experimentally that RGD performs comparably to BCD on least-squares tasks (across three real datasets) and that RGD achieves the lowest RMSE on MovieLens-1M across all tested Hadamard rank configurations, and we show that our trust-region method matches the concurrent second-order method of \citet{gillis2026manifold} on its own benchmarks (Section~\ref{sec:experiments}).
\end{enumerate}

\section{Background and related work}
\label{sec:related}

\noindent\textbf{Block coordinate descent (BCD)~\cite{wertz2024bcd}.} 
For the least-squares loss $\tfrac{1}{2}\|A - W\|_F^2$ with $W$ modeled as in~\eqref{eq:hadamard_form}, each BCD sub-problem is a row-by-row least-squares system. The update for row $i$ of $G$ is
\begin{equation}
  G_{i,:} \leftarrow \arg\min_{g\in\R^{r_1}} \tfrac{1}{2}\bigl\|\bigl(A_{i,:} - g(H\diag(Y_{i,:}))^\top\bigr)\bigr\|^2,
  \label{eq:bcd_sub}
\end{equation}
with $Y = UV^\top$ and normal-equation matrix $H^\top\diag(Y_{i,:}^2)H \in \R^{r_1\times r_1}$, which encodes how the current $H$ and $Y$ shape the update for $G$. This closed form is bound to the squared-error loss and does not exist for objectives such as cross-entropy, rendering BCD inapplicable there. AGD~\cite{ciaperoni2024hadamard} instead handles any smooth loss but uses uniform Euclidean steps that ignore this factor-dependent geometry.\\

\noindent \textbf{Riemannian optimization on quotient manifolds.} 
Riemannian optimization provides a principled framework for unconstrained optimization over smooth manifolds~\cite{absilbook, boumal2023introduction}. A Riemannian solver requires three core components: a smooth manifold $\mathcal{M}$, a Riemannian metric $g_x$ on the tangent spaces to define gradients, and a retraction to map steps back to $\mathcal{M}$. 

For the fixed-rank manifold with factorizations $W = LR^\top$, the quotient geometry under $\text{GL}(r)$ invariance is well studied~\cite{vandereycken2013low, mishra2014r3mc}. The choice of the Riemannian metric is critical. While the Euclidean metric ignores factorization symmetries, a full pullback metric captures all curvature at the cost of coupling factor blocks and yielding dense gradient systems. A practical choice is to retain only the block-diagonal elements of the pullback metric~\cite{mishra2012riemannian}.

In particular, \citet{mishra2016riemannian} showed that tailoring the Riemannian metric to the problem's underlying structure--such as utilizing normal-equation matrices--acts as a  preconditioner for the optimization, significantly accelerating convergence over Euclidean metrics. Generic implementations of these concepts exist in solvers like Manopt~\cite{manopt}, Pymanopt~\cite{pymanopt}, and Manopt.jl~\cite{manoptjl}. Our work extends this quotient manifold framework beyond a single low-rank factor to the Hadamard product of two low-rank factors. 

\noindent\textbf{Concurrent work.} Concurrently with our work, \citet{gillis2026manifold} also study manifold optimization for the Hadamard decomposition. They reformulate the problem as $X \approx WH^\top$ with $W$ and $H$ constrained to product manifolds, and they develop a manifold based method, a block projected gradient method, and a projection-free manifold gradient descent, together with new initializations for large sparse least-squares problems. Our approach differs in four respects. First, we form a quotient manifold that removes both the general-linear reparameterization and the cross-Hadamard row and column scaling symmetries. Second, we design a scale-invariant block-diagonal metric that reuses BCD's normal-equation matrices as a preconditioner. Third, we obtain a tuning-free Gauss--Newton Cauchy step size. Fourth, our framework handles any smooth loss, whereas their algorithms target the least-squares objective. 
\section{Manifold Geometry and Metric}
\label{sec:geometry}

We consider optimization problems of the form
\begin{equation*}
  \min_{G,H,U,V} \; f\!\big(\Phi(G,H,U,V)\big),
\end{equation*}
where $\Phi(G,H,U,V) \triangleq (GH^\top)\Had(UV^\top) = W$ and $f\colon\R^{m\times n}\to\R$ is any smooth loss.
All proofs are in Appendix~\ref{app:proofs}. Appendix~\ref{app:secondorder} discusses the second-order ingredients like the Levi--Civita connection.


\subsection{Symmetries and quotient structure}

The factorization $X = GH^\top$ is invariant under $(G,H)\mapsto(GQ_1, HQ_1^{-\top})$ for any $Q_1\in\mathrm{GL}(r_1)$, and analogously $(U,V)\mapsto(UQ_2, VQ_2^{-\top})$ for $Q_2\in\mathrm{GL}(r_2)$.
These \emph{reparameterization symmetries} mean that the factors are defined only up to invertible linear transformations.

In addition, the Hadamard product $W = X\Had Y$ is invariant under coupled row/column scalings:
\begin{equation}
  (X,Y)\mapsto(D_r X D_c,\; D_r^{-1}YD_c^{-1})
  \label{eq:cross_scaling}
\end{equation}
for invertible diagonal $D_r\in\R^{m\times m}$, $D_c\in\R^{n\times n}$.
This \emph{cross-Hadamard scaling symmetry} introduces $m+n$ additional degrees of freedom that a good metric should respect.

Therefore, a natural symmetry group is the direct product $\cG = \bigl(\mathrm{GL}(r_1)\times\mathrm{GL}(r_2)\bigr)\times\bigl(\mathrm{Diag}_*(m)\times\mathrm{Diag}_*(n)\bigr)$,
where $\mathrm{Diag}_*(k)$ denotes invertible $k\times k$ diagonal matrices. Note that we call $\cG$ a natural (rather than the full) symmetry group: it is generated by the GL reparameterizations and the cross-Hadamard scalings, while the Hadamard model may admit further non-identifiabilities that we do not exploit.
Since the reparameterization and scaling actions commute (\textrm{GL} acts within each factor pair, while diagonal scaling acts across pairs), our quotient manifold is
\begin{equation}
  \cM = \bigl(\R_*^{m\times r_1}\!\times\R_*^{n\times r_1}\times\R_*^{m\times r_2}\!\times\R_*^{n\times r_2}\bigr)\big/\cG,
\end{equation}
where $\R_*^{p\times r}$ denotes full-column-rank $p\times r$ matrices. The action of $\cG$ is not free: the one-parameter subgroup $(Q_1,Q_2,D_r,D_c) = (\lambda I_{r_1},\, \lambda^{-1} I_{r_2},\, \lambda^{-1} I_m,\, \lambda I_n)$ with $\lambda\neq 0$ fixes every representative $(G,H,U,V)$, so the effective quotiented group has dimension $r_1^2 + r_2^2 + m + n - 1$. In particular, the global rescaling $X\mapsto\alpha X$, $Y\mapsto\alpha^{-1}Y$ is the cross-Hadamard scaling with $D_r=D_c=\sqrt{\alpha}\,I$, and its overlap with the $\mathrm{GL}$ scalar reparameterizations produces this one-dimensional stabilizer. The dimension of the proposed manifold $\cM$ is $\dim(\cM) = (m{+}n)(r_1{+}r_2) - r_1^2 - r_2^2 - m - n {}+ 1$. In the following, we represent a point on the manifold $\cM$ by $x = (G,H,U,V)$.

\paragraph{Model map and its differential}
The differential of $\Phi$ at $x$ applied to a tangent vector $\eta = (\eta_G,\eta_H,\eta_U,\eta_V)$ is
\begin{equation}
  D\Phi_x[\eta] = \Delta X[\eta]\Had Y + X\Had\Delta Y[\eta],
  \label{eq:dphi}
\end{equation}
where $\Delta X[\eta] = \eta_G H^\top + G\eta_H^\top$ and $\Delta Y[\eta] = \eta_U V^\top + U\eta_V^\top$.

\paragraph{Vertical space}
The vertical space at $(G,H,U,V)$ consists of all tangent directions that leave $W = \Phi(x)$ unchanged:
\begin{align}
  \cV_x = \{\,(&G\Lambda_1 + \diag(\delta_r)G,\; -H\Lambda_1^\top + \diag(\delta_c)H, \notag\\
               &U\Lambda_2 - \diag(\delta_r)U,\; -V\Lambda_2^\top - \diag(\delta_c)V) \notag\\
               &: \Lambda_k\in\R^{r_k\times r_k},\; \delta_r\in\R^m,\; \delta_c\in\R^n \,\}.
\end{align}
The horizontal space $\cH_x = \cV_x^{\perp_g}$ is the metric-orthogonal complement. Horizontal vectors represent tangent directions on the quotient $\cM$, i.e., directions that change $W$ and are thus meaningful for optimization. The metric $g_x$, restricted to horizontal vectors, induces a well-defined Riemannian metric on the quotient $\cM$~\cite{absilbook}.

\subsection{Proposed scale-invariant Riemannian metric}

A Riemannian metric on the factor space must satisfy two requirements:
\begin{enumerate}[nosep,label=(\roman*)]
  \item \textbf{GL-invariance:} invariant under the reparameterization group action $(G,H)\mapsto(GQ_1,HQ_1^{-\top})$, so that $g_x$ descends to a well-defined metric on the quotient $\cM$.
  \item \textbf{Cross-Hadamard scaling invariance:} It should be invariant under the coupled row/column scalings~\eqref{eq:cross_scaling}, so that the optimizer is insensitive to the (arbitrary) magnitude split between $X$ and $Y$.
\end{enumerate}
We begin with the \emph{pullback of the Frobenius inner product} through the model map:
\begin{equation}
  \ip{\eta}{\zeta}_{\mathrm{pb}} \triangleq \ip{D\Phi_x[\eta]}{D\Phi_x[\zeta]}_F.
  \label{eq:pullback}
\end{equation}
Expanding~\eqref{eq:pullback} via~\eqref{eq:dphi} yields four block-diagonal terms (one per factor) plus cross terms coupling $(G,H)$ with $(U,V)$ and coupling each factor with its pair.
The cross terms couple all four factor blocks, making the resulting linear system dense.
We therefore propose a metric that retains the diagonal blocks of the pullback while discarding the cross terms:

\begin{definition}[Hadamard-weighted metric]\label{def:metric}
For $x=(G,H,U,V)$ with full-column-rank factors and tangent vectors $\eta,\zeta$, define
\begin{align}
  g_x(\eta,\zeta)
  &= \sum_{i=1}^m \eta_{G,i:} S_i^G \zeta_{G,i:}^\top
   + \sum_{j=1}^n \eta_{H,j:} S_j^H \zeta_{H,j:}^\top \notag\\
  &\quad + \sum_{i=1}^m \eta_{U,i:} S_i^U \zeta_{U,i:}^\top
   + \sum_{j=1}^n \eta_{V,j:} S_j^V \zeta_{V,j:}^\top,
  \label{eq:metric}
\end{align}
where $\eta_{G,i:}$ denotes row $i$ of $\eta_G$ (and similarly for other factors), with $r\times r$ weight matrices
\begin{align}
  S_i^G &= H^\top\diag(Y_{i,:}^2)H, &
  S_j^H &= G^\top\diag(Y_{:,j}^2)G,
  \label{eq:sblocks_GH} \\
  S_i^U &= V^\top\diag(X_{i,:}^2)V, &
  S_j^V &= U^\top\diag(X_{:,j}^2)U.
  \label{eq:sblocks_UV}
\end{align}
\end{definition}
\noindent\textbf{Regularity convention.} Throughout, we work on the \emph{regular subset} $\cM_* \subseteq \cM$, the open dense set on which the factors are full column rank and $X = GH^\top$, $Y = UV^\top$ have no zero entries. On $\cM_*$ every weight block~\eqref{eq:sblocks_GH}--\eqref{eq:sblocks_UV} is positive definite (Proposition~\ref{prop:metric}), so $g_x$ is a valid Riemannian metric, and all metric, algorithmic, and convergence statements below are understood to hold on $\cM_*$ without further comment.
Equivalently, $g_x$ can be written in compact matrix form as
\begin{align}
  g_x(\eta,\zeta)
  &= \tr\bigl((\eta_G H^\top\Had Y)^\top(\zeta_G H^\top\Had Y)\bigr) \notag\\
  &+ \tr\bigl((\eta_H G^\top\Had Y^\top)^\top(\zeta_H G^\top\Had Y^\top)\bigr) \notag\\
  &+ \tr\bigl((\eta_U V^\top\Had X)^\top(\zeta_U V^\top\Had X)\bigr) \notag\\
  &+ \tr\bigl((\eta_V U^\top\Had X^\top)^\top(\zeta_V U^\top\Had X^\top)\bigr),
  \notag
\end{align}
which makes explicit that each term measures the squared Frobenius norm of one factor's contribution to $D\Phi_x[\eta]$, weighted by the complementary Hadamard factor.

\begin{proposition}[Properties of $g_x$]\label{prop:metric}
The bilinear form~\eqref{eq:metric} satisfies:
\begin{enumerate}[nosep,label=(\alph*)]
\item \textbf{Positive definiteness:} $g_x(\eta,\eta) > 0$ for $\eta \neq 0$ whenever the factors are full column rank and \BMorig{neither $X$ nor $Y$ has zero entries}.
\item \textbf{$\cG$-invariance:} $g_x$ is invariant under both GL reparameterization and cross-Hadamard scaling, stated precisely and proved in Proposition~\ref{prop:gauge}.
\end{enumerate}
\end{proposition}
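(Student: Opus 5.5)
For part (a), I will show that each of the four weight blocks in \eqref{eq:sblocks_GH}--\eqref{eq:sblocks_UV} is positive definite. Then $g_x(\eta,\eta)$ is a sum of nonnegative terms, and it vanishes only if every row of every component of $\eta$ vanishes. Take $S_i^G = H^\top\diag(Y_{i,:}^2)H$. For $v\in\R^{r_1}$,
\[
v^\top S_i^G v = \|\diag(Y_{i,:})Hv\|_2^2 = \sum_{j=1}^n Y_{ij}^2 (Hv)_j^2 .
\]
On $\cM_*$ every $Y_{ij}\neq 0$, so this quantity is zero only if $Hv=0$. Since $H$ has full column rank, that forces $v=0$. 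The same argument handles $S_j^H$ (using $G$ full rank and the entries of $Y$ nonzero), $S_i^U$ (using $V$ full rank and the entries of $X$ nonzero), and $S_j^V$ (using $U$ full rank and the entries of $X$ nonzero). Strict positivity then follows.

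I would also note the equivalent route through the compact trace form. The first term equals $\|\eta_G H^\top\Had Y\|_F^2$, and row $i$ of $\eta_G H^\top\Had Y$ is $\eta_{G,i:}H^\top\diag(Y_{i,:})$. This makes the identity between the row-sum form \eqref{eq:metric} and the trace form immediate, and either form can be used.

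For part (b), the statement defers to Proposition~\ref{prop:gauge}, so I would only sketch the verification it requires. The plan is to check how the blocks and the tangent vectors transform under the action. The tangent vectors transform by the differential of the action: $\eta_G\mapsto D_r\eta_G Q_1$, $\eta_H\mapsto D_c\eta_H Q_1^{-\top}$, $\eta_U\mapsto D_r^{-1}\eta_U Q_2$, and $\eta_V\mapsto D_c^{-1}\eta_V Q_2^{-\top}$. This splits the cross-Hadamard scaling as $G\mapsto D_rG$, $H\mapsto D_cH$, $U\mapsto D_r^{-1}U$, $V\mapsto D_c^{-1}V$. The simplest check is in the trace form. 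Under the action,
\[
\eta_G H^\top\Had Y \;\mapsto\; (D_r\eta_G Q_1 Q_1^{-1}H^\top D_c)\Had(D_r^{-1}YD_c^{-1}).
\]
Because diagonal scalings commute with the Hadamard product in the sense $(D_aAD_b)\Had(D_a'BD_b') = D_aD_a'(A\Had B)D_bD_b'$, the scalings cancel and this reduces to $\eta_GH^\top\Had Y$. The other three terms cancel in the same way, with the transposes swapping the roles of $D_r$ and $D_c$.

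The main obstacle I expect is bookkeeping rather than depth. First, one has to fix consistently how $D_r,D_c$ distribute over $(G,H)$ versus $(U,V)$. Second, the transposed terms $\eta_HG^\top\Had Y^\top$ pick up $D_c$ on the left and $D_r$ on the right, and one must confirm these still cancel against $Y^\top\mapsto D_c^{-1}Y^\top D_r^{-1}$. They do, by the same diagonal-commutation identity.
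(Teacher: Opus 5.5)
Your proof is correct. Part (a) is the paper's argument, with the one step it leaves implicit written out: $v^\top S_i^G v=\sum_j Y_{ij}^2(Hv)_j^2$ vanishes only if $Hv=0$, hence only if $v=0$ by full column rank. The other three blocks follow the same way.

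For part (b) you take a somewhat different route from the paper's proof of Proposition~\ref{prop:gauge}. The paper stays in the row-sum form~\eqref{eq:metric}, treats the two actions separately, and tracks how each weight block transforms:
\begin{itemize}
\item under GL, $S_i^G\mapsto Q_1^{-1}S_i^GQ_1^{-\top}$ while $\eta_{G,i:}\mapsto\eta_{G,i:}Q_1$;
\item under scaling, $S_i^G\mapsto d_{r,i}^{-2}S_i^G$ while $\eta_{G,i:}\mapsto d_{r,i}\eta_{G,i:}$.
\end{itemize}
The factors then cancel term by term.

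You instead use the compact trace form and the identity $(D_aAD_b)\Had(D_a'BD_b')=D_aD_a'(A\Had B)D_bD_b'$. This shows that each weighted contribution, such as $\eta_GH^\top\Had Y$, is itself unchanged under a full group element $(Q_1,Q_2,D_r,D_c)$. Your version is shorter and handles GL and scaling in one computation. It also gives a slightly stronger conclusion: the four matrices whose Frobenius inner products make up $g_x$ are individually invariant, not just the resulting scalars.

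The paper's version instead exhibits the covariant transformation law of the individual $S$-blocks, which matches the row-wise structure of the algorithm. From it one reads off directly that the per-row solves $(\nabla_G f)_{i:}(S_i^G)^{-1}$ transform the same way as the tangent rows. That is the equivariance the paper invokes for the invariance of the Cauchy step.
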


Note that the S-blocks~\eqref{eq:sblocks_GH}-\eqref{eq:sblocks_UV}, and hence the proposed metric, depend only on the current factors $(G,H,U,V)$, not on the loss $f$.

\noindent \textbf{Why this particular approximation?}
The full pullback~\eqref{eq:pullback} is the Gauss--Newton form $J^\top J$ with $J = D\Phi_x$, whose cross terms couple all four factor blocks and make the linear system dense. Retaining only its block-diagonal part gives each factor row an independent $r\times r$ weight matrix $S$, so the Riemannian gradient reduces to independent $r\times r$ solves per row and column (Section~\ref{sec:algorithms}). The retained blocks still encode the local curvature of the Hadamard map, so $g_x$ acts as a block-diagonal Gauss--Newton preconditioner rather than the exact pullback metric, trading a modest amount of curvature information.



We now verify that this metric satisfies both invariance requirements.

\begin{proposition}[$\cG$-invariance]\label{prop:gauge}
The metric $g_x$ is invariant under the symmetry group $\cG$:
\begin{enumerate}[nosep,label=(\alph*)]
\item \textbf{GL reparameterization:} $(G,H,U,V)\mapsto(GQ_1, HQ_1^{-\top}, UQ_2, VQ_2^{-\top})$ for any $(Q_1,Q_2)\in\mathrm{GL}(r_1)\times\mathrm{GL}(r_2)$.
\item \textbf{Cross-Hadamard scaling:} $(G,H,U,V)\mapsto(D_rG, D_cH, D_r^{-1}U, D_c^{-1}V)$ for any invertible diagonal $D_r$, $D_c$.
\end{enumerate}
\end{proposition}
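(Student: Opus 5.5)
The plan is to make ``invariance'' precise in the standard quotient sense and then verify it block by block. For a group element $\psi\in\cG$ acting on $x=(G,H,U,V)$, the induced action on tangent vectors is linear. Under GL reparameterization, $\eta\mapsto(\eta_GQ_1,\eta_HQ_1^{-\top},\eta_UQ_2,\eta_VQ_2^{-\top})$. Under cross-Hadamard scaling, $\eta\mapsto(D_r\eta_G,D_c\eta_H,D_r^{-1}\eta_U,D_c^{-1}\eta_V)$. The claim to prove is $g_{\psi(x)}(D\psi[\eta],D\psi[\zeta])=g_x(\eta,\zeta)$ for all $\eta,\zeta$. Since the GL and diagonal actions commute and $g$ is a sum of four terms, it suffices to check each generator separately, and within each generator to check each of the four terms.

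I will work with the compact trace form of $g_x$ stated after Definition~\ref{def:metric}. Each term there has the shape $\|\eta_GH^\top\Had Y\|_F^2$ and its analogues, polarized. This form is equivalent to~\eqref{eq:metric}, because row $i$ of $\eta_GH^\top\Had Y$ equals $\eta_{G,i:}H^\top\diag(Y_{i,:})$, and its squared norm is $\eta_{G,i:}S_i^G\eta_{G,i:}^\top$.

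For part (a), the key observations are that $X=GH^\top$ and $Y=UV^\top$ are unchanged by GL reparameterization, and that $(\eta_GQ_1)(HQ_1^{-\top})^\top=\eta_GH^\top$. So the matrix $\eta_GH^\top\Had Y$ is literally unchanged. The same cancellation holds for the other three terms, for instance $(\eta_HQ_1^{-\top})(GQ_1)^\top=\eta_HG^\top$. Invariance of each term then follows immediately.

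For part (b), under scaling we have $X\mapsto D_rXD_c$ and $Y\mapsto D_r^{-1}YD_c^{-1}$. The term for $G$ transforms as $(D_r\eta_G)(D_cH)^\top\Had D_r^{-1}YD_c^{-1}=D_r(\eta_GH^\top)D_c\Had D_r^{-1}YD_c^{-1}$. The identity I will use is $(D_1AD_2)\Had(D_3BD_4)=D_1D_3(A\Had B)D_2D_4$ for diagonal matrices $D_1,\dots,D_4$. With it, the diagonal factors cancel exactly and the matrix is again unchanged. The $H$-term works the same way with $Y^\top\mapsto D_c^{-1}Y^\top D_r^{-1}$ and $\eta_HG^\top\mapsto D_c\eta_HG^\top D_r$. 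The $U$- and $V$-terms follow symmetrically, with the roles of $D$ and $D^{-1}$ swapped.

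The main obstacle I expect is not computational but one of bookkeeping. First, the induced tangent action has to be written correctly, especially the transposes in the $H$ and $V$ blocks. Second, the argument must match what ``invariance'' needs to mean for the metric to descend to $\cM$. If the row-wise form is used directly, $S_i^G$ transforms as $Q_1^{\top}S_i^GQ_1$ under GL and as $(D_c)$-weighted with $Y_{i,:}^2$ scaled by $d_{r,i}^{-2}$ under scaling. Checking those congruences row by row is an alternative route and a useful sanity check. The trace form, however, makes every cancellation a one-line identity, so that is the form I will present.
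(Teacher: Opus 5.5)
Your argument is correct, and it differs from the paper's mainly in where the cancellation happens.

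The paper works in the row-wise form~\eqref{eq:metric}. It records how each weight block transforms: $S_i^{\tilde G}=Q_1^{-1}S_i^GQ_1^{-\top}$ under GL, and $S_i^{\tilde G}=d_{r,i}^{-2}S_i^G$ under scaling. It then cancels these against $\tilde\eta_{G,i:}=\eta_{G,i:}Q_1$ and $\tilde\eta_{G,i:}=d_{r,i}\,\eta_{G,i:}$ respectively.

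You instead use the compact trace form. You show that each of the four matrices $\eta_GH^\top\Had Y$, $\eta_HG^\top\Had Y^\top$, $\eta_UV^\top\Had X$, $\eta_VU^\top\Had X^\top$ is itself unchanged by the action. For GL this follows from $(\eta_GQ_1)(HQ_1^{-\top})^\top=\eta_GH^\top$ with $X,Y$ fixed. For scaling it follows from $(D_1AD_2)\Had(D_3BD_4)=D_1D_3(A\Had B)D_2D_4$.

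Your route proves a slightly stronger fact: each factor's contribution to $D\Phi_x[\eta]$ is invariant as a matrix, not just the resulting inner product. It also reduces the scaling case to a single identity with no per-row bookkeeping. The paper's route instead gives the explicit transformation laws of the $S$-blocks. Those laws make the equivariance of the per-row solves~\eqref{eq:rgrad_G} immediate. For example, $(\nabla_G f)_{i:}$ picks up $Q_1^{-\top}$, and combining this with $(Q_1^{-1}S_i^GQ_1^{-\top})^{-1}$ gives $\mathrm{grad}_{G,i:}\,Q_1$.

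One correction to your side remark about the row-wise route. Under GL the block does not transform as $Q_1^{\top}S_i^GQ_1$. Because $H\mapsto HQ_1^{-\top}$, one gets $(HQ_1^{-\top})^\top\diag(Y_{i,:}^2)\,HQ_1^{-\top}=Q_1^{-1}S_i^GQ_1^{-\top}$. It is this law that cancels against $\eta_{G,i:}\mapsto\eta_{G,i:}Q_1$. With $Q_1^\top S_i^GQ_1$, your sanity check would fail for any non-orthogonal $Q_1$. Under scaling, the $D_c$ factors cancel entirely, and the block is simply multiplied by $d_{r,i}^{-2}$.
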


\noindent \textbf{Connection to BCD.}
The weight blocks $S_i^G$ are exactly BCD's normal-equation matrices~\eqref{eq:bcd_sub}.
The Riemannian gradient (derived in Section~\ref{sec:algorithms}) can therefore be interpreted as performing ``one soft BCD step'' on every row simultaneously, but for an arbitrary smooth loss function rather than least squares alone.

\section{Riemannian Optimization Algorithms}
\label{sec:algorithms}

Using this metric, we derive the Riemannian gradient and develop a Riemannian gradient descent (RGD) algorithm.
The entire algorithm interacts with the loss function $f$ only through the $W$-space gradient $E = \nabla_W f \in \R^{m\times n}$. The manifold machinery (metric, S-blocks, retraction) is completely independent of the loss function $f$.

\subsection{Euclidean and Riemannian gradients}

Given $E = \nabla_W f$, define $E_Y = E\Had Y$ and $E_X = E\Had X$. By the chain rule, the Euclidean factor gradients are
\begin{equation}
\begin{array}{ll}
  \nabla_G f = E_Y H, \nabla_H f = E_Y^\top G, \\
  \nabla_U f = E_X V, \nabla_V f = E_X^\top U.
  \end{array}
  \label{eq:egrad}
\end{equation}
Under the metric~\eqref{eq:metric}, the Riemannian gradient $\mathrm{grad}\,f(x)$ is the unique tangent satisfying $Df(x)[\xi] = g_x(\mathrm{grad}\,f(x),\xi)$ for all $\xi$.
Since $g_x$ is block-diagonal with blocks indexed by rows/columns, this identity decouples into independent $r\times r$ systems.
For the $G$-block, row $i$: $(\nabla_G f)_{i:}\xi_{G,i:}^\top = (\mathrm{grad}_G f)_{i:} S_i^G \xi_{G,i:}^\top$ for all $\xi_{G,i:}$, which gives
\begin{align}
  (\mathrm{grad}_G f)_{i:} &= (\nabla_G f)_{i:}\,(S_i^G)^{-1}, \label{eq:rgrad_G}\\
  (\mathrm{grad}_H f)_{j:} &= (\nabla_H f)_{j:}\,(S_j^H)^{-1}, \label{eq:rgrad_H}
\end{align}
and analogously for $U$, $V$.
Each S-block solve is an $r\times r$ linear system. Since $f$ depends on $x$ only through $\Phi(x)$ and the metric $g_x$ is $\cG$-invariant, the Riemannian gradient~\eqref{eq:rgrad_G}--\eqref{eq:rgrad_H} automatically lies in the horizontal space $\cH_x$~\cite{absilbook, boumal2023introduction}.

\begin{remark}[Cheap gradient norm]\label{rem:cheap_norm}
The squared Riemannian gradient norm satisfies
$\|\mathrm{grad}\,f\|_x^2 = g_x(\mathrm{grad}\,f,\mathrm{grad}\,f) = Df(x)[\mathrm{grad}\,f] = \sum_F \tr(\nabla_F f^\top\,\mathrm{grad}_F f)$,
where the second equality uses the defining relation of the Riemannian gradient and the sum runs over the four factors $F\in\{G,H,U,V\}$.
This is a plain Euclidean inner product, avoiding S-block recomputation.
\end{remark}

\subsection{Retraction}
A retraction on the total space induces a well-defined retraction on the quotient manifold~\cite[Prop.~4.1.3]{absilbook}.
We use the additive retraction:
\begin{equation}
  R_x(\eta) = (G{+}\eta_G,\; H{+}\eta_H,\; U{+}\eta_U,\; V{+}\eta_V),
  \label{eq:retraction}
\end{equation}
which satisfies the retraction axioms $R_x(0) = x$ and $DR_x(0)[\eta] = \eta$ by inspection.
It is valid on the open set of full-column-rank factors.

\subsection{Cauchy step size}
\label{sec:cauchy}

The retraction $R_x$ appearing below maps a tangent step back to the manifold and is defined in~\eqref{eq:retraction}.
RGD uses Armijo back-tracking for the step size: starting from an initial trial step $\alpha_k^0$, the step is contracted by a factor $\tau$ (we use $\tau{=}0.5$) until the sufficient-decrease condition $f(R_x(\alpha\,d)) \leq f(x) + c_1\alpha\,Df(x)[d]$ is satisfied (we use $c_1{=}10^{-4}$) \cite{absilbook,boumal2023introduction}.
We derive the initial trial step from a \emph{Gauss--Newton approximation}.
Linearizing $\Phi$ along the retracted curve $\alpha \mapsto R_x(\alpha\,d)$ and replacing the loss Hessian $\nabla^2_W f$ with the identity (the standard Gauss--Newton simplification, which is exact for the least-squares loss) yields the quadratic model
\begin{equation}
  f(R_x(\alpha\,d)) \approx f(x) + \alpha\,Df(x)[d] + \tfrac{\alpha^2}{2}\|D\Phi_x[d]\|_F^2.
  \label{eq:cauchy_model}
\end{equation}
Minimizing over $\alpha$ gives the Cauchy step:
\begin{equation}
  \alpha^{\mathrm{Cauchy}} = \frac{-Df(x)[d]}{\|D\Phi_x[d]\|_F^2}.
  \label{eq:cauchy}
\end{equation}
Using~\eqref{eq:dphi} with $\Delta X[d] = d_G H^\top + G\,d_H^\top$ and $\Delta Y[d] = d_U V^\top + U\,d_V^\top$, the numerator and denominator in (\ref{eq:cauchy}) are
\begin{align}
  Df(x)[d] &= \ip{E}{\Delta X[d]\Had Y + X\Had\Delta Y[d]}_F, \notag\\
  \|D\Phi_x[d]\|_F^2 &= \|\Delta X[d]\Had Y + X\Had\Delta Y[d]\|_F^2. \notag
\end{align}
The numerator can also be evaluated cheaply via Remark~\ref{rem:cheap_norm} as $Df(x)[d] = -\sum_F \tr(\nabla_F f^\top\,\mathrm{grad}_F f)$ when $d = -\mathrm{grad}\,f$.
The Cauchy step is invariant under the symmetry group $\cG$: since $g_x$ is $\cG$-invariant (Proposition~\ref{prop:gauge}), the search direction $d = -\mathrm{grad}\,f(x)$ transforms equivariantly under $\cG$, and because $\Phi$ is invariant under $\cG$, its differential satisfies $D\Phi_{\tilde x}[\tilde d] = D\Phi_x[d]$ for any equivalent representative $(\tilde x, \tilde d)$.
Thus both the numerator $Df(x)[d] = \ip{E}{D\Phi_x[d]}_F$ and the denominator $\|D\Phi_x[d]\|_F^2$ are invariant under $\cG$, so $\alpha^{\mathrm{Cauchy}}$ is independent of the choice of representative.
For non-quadratic losses, \eqref{eq:cauchy_model} is an approximation; the Armijo back-tracking corrects for any discrepancy.
This step size is positive (since $d$ is a descent direction, $Df(x)[d] < 0$) and requires no step-size tuning.

The Armijo line search ensures sufficiently small steps in practice. Because $\cM_*$ is open and the Armijo condition enforces a monotone decrease of $f$, a sufficiently small step keeps each iterate regular. In practice, we accept a trial step only if the retracted factors stay full rank with $X$ and $Y$ free of near-zero entries, and we contract further otherwise. Under these safeguards, standard Riemannian descent theory~\cite{absilbook,boumal2023introduction} applies. The objective decreases monotonically, and every accumulation point of the iterates is a stationary point, that is, $\mathrm{grad}\,f(x_k)\to 0$.

\subsection{Algorithms}

\subsubsection{Riemannian gradient descent (RGD)}

\begin{algorithm}[t]
\caption{RGD on the Hadamard low-rank manifold}
\label{alg:rgd}
\begin{algorithmic}[1]
\STATE \textbf{Input:} smooth loss $f$, ranks $r_1,r_2$
\STATE Initialize factors $G_0,H_0,U_0,V_0$ (SVD-based)
\FOR{$k=0,1,2,\dots$ \textbf{until} $\|\mathrm{grad}\,f\|_x / \|\mathrm{grad}\,f\|_{x_0} < \mathrm{tol}$}
    \STATE $x_k=(G_k,H_k,U_k,V_k)$.
  \STATE $X_k \leftarrow G_kH _k^\top$,\; $Y_k \leftarrow U_kV_k^\top$,\; $W_k \leftarrow X_k\Had Y_k$
  \STATE $E_k \leftarrow \nabla_W f(W_k)$ \hfill\textit{(loss-specific)}
  \STATE Euclidean gradients via~\eqref{eq:egrad}
  \STATE Form S-blocks~\eqref{eq:sblocks_GH}--\eqref{eq:sblocks_UV}
  \STATE Riemannian gradient via~\eqref{eq:rgrad_G}--\eqref{eq:rgrad_H}
  \STATE $d_k \leftarrow -\mathrm{grad}\,f(x_k)$
  \STATE Cauchy step $\alpha_k^0$ via~\eqref{eq:cauchy}; Armijo back-tracking
  \STATE $x_{k+1} \leftarrow R_{x_k}(\alpha_k\,d_k)$ via~\eqref{eq:retraction}
\ENDFOR
\end{algorithmic}
\end{algorithm}

Algorithm~\ref{alg:rgd} summarizes RGD. Computing the gradient norm required for convergence monitoring has low computational overheads (Remark~\ref{rem:cheap_norm}).

\subsubsection{Riemannian trust-region method (RTR)}
\label{sec:rtr}
The same geometry supports a second-order method. A Riemannian trust-region method (RTR) minimizes a quadratic model of $f$ inside a truncated conjugate-gradient (Steihaug--Toint) ball, and needs three ingredients beyond RGD: a horizontal projection $P_x^{\cH}$ onto $\cH_x$ (so the Hessian maps the horizontal space to itself), the ambient Levi--Civita connection $\nabla$ of the metric~\eqref{eq:metric} (with Christoffel term $\Gamma_x$), and the Riemannian Hessian. With $\eta=\mathrm{grad}\,f$, the Euclidean Hessian-vector product $H_E[\xi]=D(\nabla f)[\xi]$, and $(D_\xi g)$ the directional derivative of the weight blocks~\eqref{eq:sblocks_GH}--\eqref{eq:sblocks_UV}, the Hessian on the quotient is
\begin{equation}
  \mathrm{Hess}\,f(x)[\xi] = P_x^{\cH}\,g^{-1}\!\Big(H_E[\xi] + \tfrac12\big[(D_\eta g)\xi - (D_\xi g)\eta - c(\xi,\eta)\big]\Big),
  \label{eq:hess_main}
\end{equation}
where $c(\xi,\eta)=\nabla_x[g_x(\xi,\eta)]$ is the reaction covector. Only $H_E$ depends on the loss (through $\nabla^2_W f$). Appendix~\ref{app:secondorder} derives all three ingredients (Propositions~\ref{prop:hproj}--\ref{prop:hess}) and proves that~\eqref{eq:hess_main} is the Riemannian Hessian on the quotient; their per-iteration cost is analyzed in Section~\ref{sec:complexity}. At each step RTR minimizes the quadratic model $m_k(\xi)=g_x(\mathrm{grad}\,f(x_k),\xi)+\tfrac12\,g_x(\mathrm{Hess}\,f(x_k)[\xi],\xi)$ over horizontal $\xi$ in the trust ball, summarized in Algorithm~\ref{alg:rtr}. 

\begin{algorithm}[t]
\caption{RTR on the Hadamard low-rank manifold}
\label{alg:rtr}
\begin{algorithmic}[1]
\STATE \textbf{Input:} smooth loss $f$, ranks $r_1,r_2$, initial trust radius $\Delta_0$
\STATE Initialize factors $G_0,H_0,U_0,V_0$ (SVD-based)
\FOR{$k=0,1,2,\dots$ \textbf{until} $\|\mathrm{grad}\,f\|_{x_k} < \mathrm{tol}$}
  \STATE $x_k=(G_k,H_k,U_k,V_k)$
  \STATE Riemannian gradient $\mathrm{grad}\,f(x_k)$ via~\eqref{eq:rgrad_G}--\eqref{eq:rgrad_H}
  \STATE Truncated-CG solve for horizontal $\xi_k$ with $\|\xi_k\|_{x_k}\le\Delta_k$, using $\mathrm{Hess}\,f(x_k)[\cdot]$~\eqref{eq:hess_main} and $P_{x_k}^{\cH}$ (Prop.~\ref{prop:hproj})
  \STATE Reduction ratio $\rho_k \leftarrow \big(f(x_k)-f(R_{x_k}(\xi_k))\big)/\big({-}m_k(\xi_k)\big)$
  \STATE Accept $x_{k+1}\leftarrow R_{x_k}(\xi_k)$ if $\rho_k$ sufficient, else $x_{k+1}\leftarrow x_k$; update $\Delta_{k+1}$
\ENDFOR
\end{algorithmic}
\end{algorithm}

\subsection{Per-iteration complexity}
\label{sec:complexity}

The per-iteration cost of RGD separates into a \emph{manifold cost} (independent of $f$) and a \emph{loss cost} (dependent on $f$). Below, we give details of the cost computations for the masked reconstruction problem cost $f(W) = \frac{1}{2}\|\Omega\Had(A-W)\|_F^2$ for an observation mask $\Omega$. Let $r = \max(r_1,r_2)$, $N = |\Omega|$ the number of observed entries, and $L$ the number of Armijo back-tracking steps (typically $L\leq 3$). Table~\ref{tab:complexity} summarizes the cost breakdown. We first account for the first-order (RGD) primitives, then for the additional ingredients of the trust-region method (RTR).


\noindent\textbf{S-block formation} ($O((m{+}n)r_1^2 r_2^2)$).
A naive computation of $S_i^G = H^\top\diag(Y_{i,:}^2)H$ would form the full $m\times n$ matrix $Y^2$, costing $O(mnr^2)$. We avoid this. Expanding $Y_{ij}^2 = (\sum_b U_{ib}V_{jb})^2 = \sum_{b,c} U_{ib}U_{ic}V_{jb}V_{jc}$ gives a tensor decomposition:
\begin{equation}
  S_i^G = \sum_{b,c=1}^{r_2} U_{ib}\,U_{ic}\;T_{bc}^G,
  \quad T_{bc}^G = H^\top\diag(V_{:,b}\Had V_{:,c})\,H.
  \label{eq:tensor_sblock}
\end{equation}
The $r_2^2$ tensor slices $T_{bc}^G \in \R^{r_1\times r_1}$ are precomputed once from the $n$-dimensional factors at cost $O(nr_1^2 r_2^2)$.
Assembling all $m$ blocks $S_i^G$ then costs $O(mr_1^2 r_2^2)$: for each row~$i$, form the weighted sum of $r_2^2$ pre-stored $r_1{\times}r_1$ matrices using the $r_2^2$ weights $U_{ib}U_{ic}$.
Repeating for all four S-block types ($S^G$, $S^H$, $S^U$, $S^V$) gives a total S-block cost of $O((m{+}n)r_1^2 r_2^2)$.
No $m\times n$ matrix is ever formed. 

\noindent\textbf{Row-wise solves} ($O((m{+}n)r^3)$).
The $(m{+}n)$ independent $r\times r$ linear systems~\eqref{eq:rgrad_G}--\eqref{eq:rgrad_H} are solved via Cholesky factorization, at cost $O(r^3)$ each. 
The Cholesky factorization is well defined because each $S$-block is positive definite (Proposition~\ref{prop:metric}). For sparse or poorly scaled data in which a block may become ill-conditioned, a small ridge $\epsilon I$ can be added before factorization (metric regularization) at negligible cost. The random orthogonal initialization used in Section~\ref{sec:experiments} further avoids degenerate initial $S$-blocks.

\noindent\textbf{Euclidean gradients} ($O(Nr)$ with sparse $\Omega$).
For losses involving only observed entries, the data-term gradient $\nabla_G f_{\mathrm{data}} = \sum_{(i,j) \in \Omega} E_{ij} Y_{ij}\,H_{j,:}$ is accumulated via scatter-add at cost $O(Nr)$.
The regulariser contribution $\lambda\,\nabla_G (\tfrac{1}{2}\|W\|_F^2) = \lambda\,(X\Had Y^2)\,H$ reuses the already-computed S-blocks:
$[\lambda\,(X\Had Y^2)\,H]_{i,:} = \lambda\,G_{i,:}\,S_i^G$, which costs $O(mr^2)$.
Total Euclidean gradient cost: $O(Nr + (m{+}n)r^2)$. 

\noindent\textbf{Cauchy step and line search.}
The Cauchy denominator $\|D\Phi_x[d]\|_{\Omega}^2 = \sum_{(i,j)\in\Omega} (D\Phi_x[d])_{ij}^2$ (the squared Frobenius norm restricted to observed entries) and each Armijo trial evaluation of $f(R_x(\alpha d))$ require reconstructing $W$ only at $\Omega$, at cost $O(Nr)$ per evaluation.
The regulariser $\|W\|_F^2$ is computed from factor Gram matrices at $O((m{+}n)r^4)$.
With $L$ Armijo steps, the line-search cost is $O(LNr + L(m{+}n)r^4)$. 

\noindent\textbf{Gradient norm} ($O((m{+}n)r)$).
By Remark~\ref{rem:cheap_norm}, the squared norm $\|\mathrm{grad}\,f\|^2 = \sum_F \tr(\nabla_F f^\top\,\mathrm{grad}_F f)$ reduces to a Euclidean dot product at cost $O((m{+}n)r)$. 

\noindent\textbf{Second-order ingredients (RTR)} ($N$-independent).
The trust-region method reuses the first-order primitives above and adds three costs, none of which touch the data term. The horizontal projection of Proposition~\ref{prop:hproj} is a matrix-free conjugate-gradient solve whose operator applies at $O((m{+}n)r^2)$ and forms no $m\times n$ matrix. The ambient Levi--Civita connection contributes Christoffel contractions of the metric-derivative blocks~\eqref{eq:sblocks_GH}--\eqref{eq:sblocks_UV}, again $O((m{+}n)r^2)$ through matrix products with no dense tensor stored. One Hessian-vector product~\eqref{eq:hess_main} then combines the batched $r\times r$ solves of the gradient, these metric-derivative contractions, and one horizontal projection.

\begin{table}[t]
\centering
\caption{Per-iteration cost breakdown.  $r = \max(r_1,r_2)$, $N = |\Omega|$, $L$ denotes the number of Armijo steps (usually ${\leq}3$), and LS in the last row denotes the least squares loss function.
}
\label{tab:complexity}
\small
\begin{tabular}{@{}lccc@{}}
\toprule
Operation & RGD & BCD & AGD \\
\midrule
S-blocks / normal eq. & $(m{+}n)r_1^2 r_2^2$ & $O(Nr^2)$ & --- \\
Row-wise $r{\times}r$ solves & $(m{+}n)r^3$ & $(m{+}n)r^3$ & --- \\
Eucl.\ gradients & $O(Nr)$ & --- & $O(Nr)$ \\
Cauchy + Armijo & $LNr {+} L(m{+}n)r^4$ & --- & --- \\
Gradient norm & $(m{+}n)r$ & --- & --- \\
\midrule
\textbf{Total} & $O(Nr {+} (m{+}n)r^4)$ & $O(Nr^2)$ & $O(Nr)$ \\
\midrule
Loss support & any $f$ & LS only & any $f$ \\
\bottomrule
\end{tabular}
\end{table}

\noindent\textbf{Comparison with BCD and AGD.}
BCD's total per-iteration cost is $O(Nr^2 + (m{+}n)r^3)$: each of four factor sweeps assembles masked normal equations at $O(Nr^2)$ and solves $(m{+}n)$ systems at $O(r^3)$ each (see Section~\ref{sec:related} for the sub-problem structure). BCD avoids line search (each sub-problem is solved exactly) and can be accelerated with heavy-ball momentum, but requires a least squares loss. 
RGD's dominant cost is $O(Nr + (m{+}n)r^4)$: an extra factor of $r$ in the $(m{+}n)$ term but a factor of $r$ less in the $N$ term compared to BCD, with the advantage of supporting any smooth loss.

AGD uses plain Euclidean gradient steps at $O(Nr)$ per iteration, the cheapest option but without curvature information, as reflected in the experimental results (Section~\ref{sec:experiments}).

\section{Numerical Experiments}
\label{sec:experiments}

We evaluate our first- and second-order methods in four experiments:
\begin{enumerate}[nosep,label=(\Alph*)]
\item \textbf{Full matrix approximation} (least squares, fully observed): synthetic robustness stress tests on $2{,}000{\times}1{,}000$ targets verifying scale invariance, conditioning adaptation, and noise robustness, plus real-world datasets.
\item \textbf{MovieLens-1M matrix completion} ($6{,}040{\times}3{,}706$, $4.5\%$ density), testing genuine Hadamard-rank configurations at fixed parameter budget $r_1{+}r_2{=}6$ (so that all configurations use the same $(m{+}n)(r_1{+}r_2)$ parameters).
\item \textbf{Logistic link prediction} (non-quadratic loss): ``like'' prediction on MovieLens-1M, where BCD is inapplicable and held-out AUC-ROC measures quality (Section~\ref{sec:exp_logistic}).
\item \textbf{Riemannian trust-region method (RTR)}: we compare against the concurrent second-order method of \citet{gillis2026manifold} on its own least-squares benchmarks (Section~\ref{sec:exp_secondorder}).
\end{enumerate}


\noindent\textbf{Baselines.}
We compare against BCD~\cite{wertz2024bcd} (heavy-ball momentum $\beta{=}0.8$) and AGD~\cite{ciaperoni2024hadamard} (normalized Euclidean gradient steps, wall-clock time matched to RGD).

\noindent\textbf{Common setup.}
All solvers share the same SVD initialization~\cite{wertz2024bcd}: $(G_0,H_0)$ from the leading singular vectors of $\sqrt{|\Omega\Had A|}$, $(U_0,V_0)$ from those of $\sqrt{|\Omega\Had A|}\cdot\mathrm{sign}(\Omega\Had A)$, followed by a global magnitude scaling. For RGD, the factors are additionally rotated by random orthogonal matrices to avoid degenerate initial S-blocks.
Regularization $\lambda$ is grid-searched over $\{10^{-6},\ldots,10^{-2}\}$ via a short RGD run on seed~0, then fixed for all solvers and seeds to ensure that differences reflect optimization quality rather than hyperparameter tuning.
All timings are wall-clock, and all solvers use the same NumPy/SciPy/Pymanopt~\cite{pymanopt} backend on the same machine, so their relative timings are directly comparable (hardware details in Appendix~\ref{app:expdetails}). RGD and BCD run for 300~iterations and AGD runs for as many iterations as fit within RGD's measured wall-clock time.

\subsection{Full matrix approximation}
\label{sec:exp_full}

\subsubsection{Real datasets}

We solve $\min \frac{1}{2}\|A - W\|_F^2 + \frac{\lambda}{2}\|W\|_F^2$ with $\lambda = 10^{-4}$ on three fully observed real-world matrices at symmetric ranks $r_1 = r_2 = r$:
Camera ($256{\times}256$, grayscale image),
Football ($115{\times}115$, network adjacency~\cite{girvan2002community}), and
Les Mis\'{e}rables ($77{\times}77$, co-appearance network~\cite{knuth1993stanford}).
Performance is measured by relative error $\|W-A\|_F / \|A\|_F$.

\begin{figure*}[t]
\centering
\includegraphics[width=0.95\textwidth]{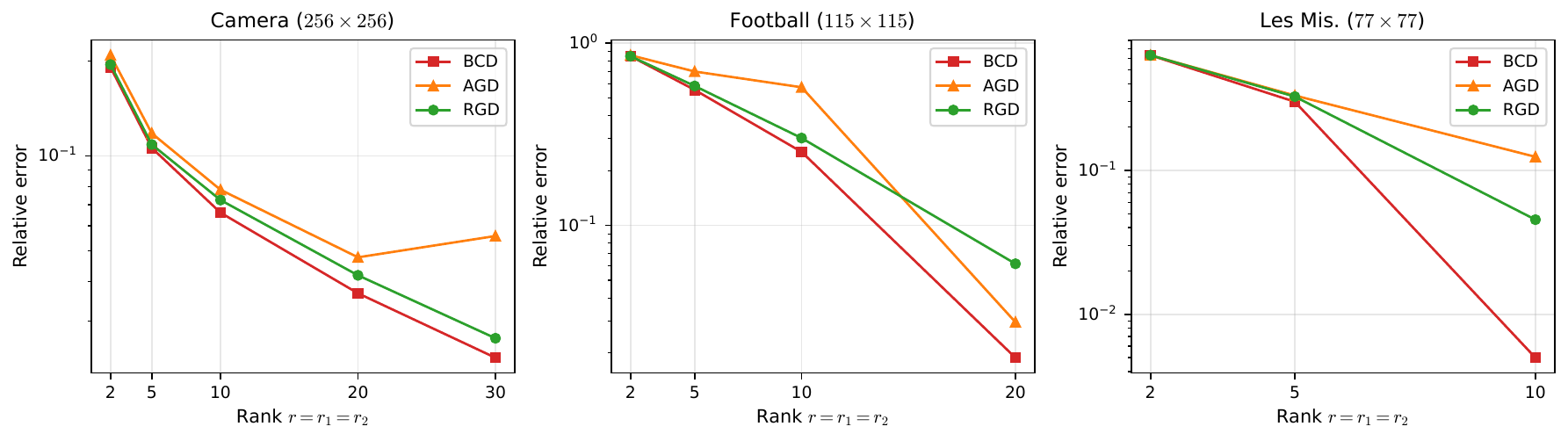}
\caption{Full matrix approximation: relative error vs.\ rank $r{=}r_1{=}r_2$ (log scale) on three real datasets.  All three solvers perform comparably, with BCD having a slight edge due to its exact sub-problem solves.}
\label{fig:full_approx}
\end{figure*}

Figure~\ref{fig:full_approx} shows that RGD tracks BCD closely across all datasets and ranks, confirming that the S-block metric provides BCD-equivalent preconditioning for quadratic losses, while AGD falls behind as rank increases (on Football at $r{=}10$, AGD reaches $0.57$ against $0.30$ for RGD and $0.25$ for BCD). BCD's exact sub-problem solves give it a small edge at higher ranks on the network data (e.g., Les~Mis.\ $r{=}10$: $0.005$ against $0.045$), as expected. RGD's advantage is its generalization to the non-quadratic losses where BCD is inapplicable.

\subsubsection{Synthetic robustness stress tests}
\label{sec:exp_robustness}

Experiment~A evaluated approximation quality on fixed datasets.
We now test solver robustness on synthetic $2{,}000{\times}1{,}000$ ground-truth targets of the form $A^\star = (G^\star H^{\star\top})\Had(U^\star V^{\star\top})$, where exact recovery is possible.
Performance is measured by the recovery error $\|W - A^\star\|_F / \|A^\star\|_F$ after 300 iterations (mean over 3~seeds).
We test robustness along three axes (factor conditioning, scale separation, and noise), varying one at a time while fixing the other two at their default values:

\begin{enumerate}[nosep]
\item \textbf{Factor conditioning} ($\kappa_{\mathrm{cond}} \in \{1,10,100,1{,}000\}$, with $\kappa_{\mathrm{scale}}{=}1$, $\sigma{=}0$):
the ground-truth factors $G^\star, H^\star, U^\star, V^\star$ are constructed with geometrically spaced singular values from $1$ to $1/\kappa_{\mathrm{cond}}$.
\item \textbf{Scale separation} ($\kappa_{\mathrm{scale}} \in \{1,10,100,1{,}000\}$, with $\kappa_{\mathrm{cond}}{=}10$, $\sigma{=}0$):
the SVD-based initialization is perturbed by diagonal scalings
$(G_0,H_0,U_0,V_0) \mapsto (D_r G_0, D_c H_0, D_r^{-1} U_0, D_c^{-1} V_0)$
with $D_r = \diag(\kappa_{\mathrm{scale}}^{(i{-}1)/(m{-}1)})$ and $D_c$ defined analogously.
This preserves the initial $W_0 = X_0 \Had Y_0$ but produces an adversarial factor representation.
Because the Riemannian metric is invariant under such scalings (Proposition~\ref{prop:gauge}(b)), RGD should produce identical iterates regardless of $\kappa_{\mathrm{scale}}$.
\item \textbf{Noise} ($\sigma \in \{0,0.01,0.1,0.5\}$, with $\kappa_{\mathrm{cond}}{=}10$, $\kappa_{\mathrm{scale}}{=}1$):
the target is corrupted as $A_{\mathrm{noisy}} = A^\star + \sigma\,(\|A^\star\|_F / \sqrt{mn})\,\mathcal{N}(0,1)$. Solvers are initialized from the noisy target and recovery is measured against the clean $A^\star$.
\end{enumerate}
We test three rank pairs at fixed budget $r_1{+}r_2{=}6$: $(1,5)$, $(2,4)$, $(3,3)$, each with 3~seeds.
All solvers are unregularized ($\lambda{=}0$) and share the same SVD-based initialization from the (possibly noisy) target.

\subsubsection{Conditioning sweep}

\begin{table}[t]
\centering
\caption{Conditioning sweep ($2{,}000{\times}1{,}000$, $\kappa_{\mathrm{scale}}{=}1$, $\sigma{=}0$): recovery error vs.\ factor condition number.
Mean over 3 seeds.  Best per row in bold.}
\label{tab:conditioning}
\small
\begin{tabular}{@{}c c c c c@{}}
\toprule
$(r_1,r_2)$ & $\kappa_{\mathrm{cond}}$ & RGD & BCD & AGD \\
\midrule
\multirow{4}{*}{$(1,5)$}
  & 1    & \textbf{0.0} & \textbf{0.0} & 9.5e-3 \\
  & 10   & \textbf{0.0} & \textbf{0.0} & 9.5e-2 \\
  & 100  & \textbf{0.0} & \textbf{0.0} & 4.3e-2 \\
  & 1000 & \textbf{0.0} & \textbf{0.0} & 1.4e-2 \\
\midrule
\multirow{4}{*}{$(2,4)$}
  & 1    & 3.5e-1 & \textbf{3.4e-1} & 6.3e-1 \\
  & 10   & \textbf{1.8e-3} & 2.8e-3 & 2.3e-2 \\
  & 100  & 3.9e-4 & \textbf{3.1e-4} & 6.4e-3 \\
  & 1000 & \textbf{6.4e-5} & 9.6e-5 & 1.3e-3 \\
\midrule
\multirow{4}{*}{$(3,3)$}
  & 1    & 2.5e-1 & \textbf{1.6e-1} & 7.6e-1 \\
  & 10   & \textbf{1.1e-2} & 1.4e-2 & 3.2e-2 \\
  & 100  & 1.8e-4 & \textbf{1.5e-4} & 3.0e-3 \\
  & 1000 & 2.2e-6 & \textbf{1.5e-6} & 3.3e-4 \\
\bottomrule
\end{tabular}
\end{table}

Table~\ref{tab:conditioning} shows that higher factor conditioning ($\kappa_{\mathrm{cond}}$) consistently improves all solvers.
At $\kappa_{\mathrm{cond}}{=}1$ (isotropic factors), the landscape has many symmetrically equivalent local minima, and all solvers struggle at genuinely Hadamard ranks.
RGD and BCD are comparable across ranks, while AGD errors are consistently one to two orders of magnitude larger, reflecting the absence of curvature information.
At the near-vanilla rank $(1,5)$, both RGD and BCD achieve machine-precision recovery at all conditioning levels (not shown for brevity).

\subsubsection{Scale invariance}

\begin{figure*}[t]
\centering
\includegraphics[width=0.95\textwidth]{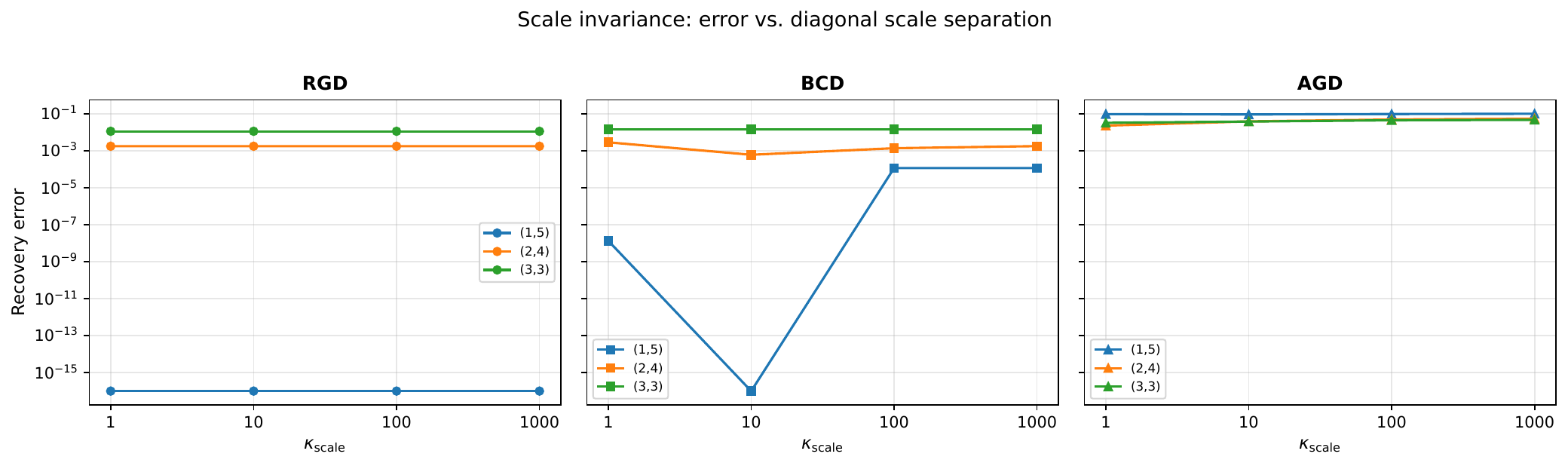}
\caption{Scale invariance: recovery error vs.\ diagonal scale separation $\kappa_{\mathrm{scale}}$ at three rank pairs ($\kappa_{\mathrm{cond}}{=}10$, $\sigma{=}0$, mean over 3~seeds).
RGD's error is \emph{exactly constant} across all $\kappa_{\mathrm{scale}}$ values for every rank (flat lines), confirming the metric invariance of Proposition~\ref{prop:gauge}(b).
BCD is nearly invariant at $(3,3)$ but shows mild sensitivity at $(2,4)$.
AGD degrades mildly ($1.1$--$2.4{\times}$ ratio).}
\label{fig:scale_invariance}
\end{figure*}

Figure~\ref{fig:scale_invariance} presents the key robustness result.
By Proposition~\ref{prop:gauge}(b), the Riemannian metric is invariant under cross-Hadamard scaling $(G,H,U,V) \mapsto (D_r G, D_c H, D_r^{-1}U, D_c^{-1}V)$.
To test this empirically, we perturb the initial factors by log-spaced diagonal matrices with spread $\kappa_{\mathrm{scale}}$ while preserving the initial product $W_0$.
As predicted, RGD produces \emph{identical} final errors across all $\kappa_{\mathrm{scale}} \in \{1, 10, 100, 1{,}000\}$ for every rank (max/min ratio $= 1.0000$ to machine precision, flat lines in the left panel).
This is a direct empirical verification of the metric invariance.

BCD is also approximately invariant at most ranks (its normal-equation matrices have the same structure as RGD's S-blocks), but shows mild sensitivity at $(2,4)$.
AGD's error varies by a factor of $1.1$--$2.4{\times}$ across scale separations: while the final $W$ is invariant under the scaling, the Euclidean gradient steps used by AGD are not, so the optimization trajectory changes.

\subsubsection{Noise robustness}

\begin{table}[t]
\centering
\caption{Noise sweep ($2{,}000{\times}1{,}000$, $\kappa_{\mathrm{cond}}{=}10$, $\kappa_{\mathrm{scale}}{=}1$): recovery error vs.\ noise level $\sigma$.
Mean over 3 seeds.  Best per row in bold.}
\label{tab:noise}
\small
\begin{tabular}{@{}c c c c c@{}}
\toprule
$(r_1,r_2)$ & $\sigma$ & RGD & BCD & AGD \\
\midrule
\multirow{4}{*}{$(1,5)$}
  & 0    & \textbf{0.0} & \textbf{0.0} & 9.5e-2 \\
  & 0.01 & \textbf{8.6e-4} & \textbf{8.6e-4} & 4.8e-2 \\
  & 0.1  & \textbf{8.7e-3} & \textbf{8.7e-3} & 3.6e-2 \\
  & 0.5  & \textbf{4.9e-2} & \textbf{4.9e-2} & 7.3e-2 \\
\midrule
\multirow{4}{*}{$(2,4)$}
  & 0    & \textbf{1.8e-3} & 2.8e-3 & 2.3e-2 \\
  & 0.01 & 2.3e-3 & \textbf{1.1e-3} & 4.3e-2 \\
  & 0.1  & \textbf{9.0e-3} & \textbf{9.0e-3} & 2.7e-2 \\
  & 0.5  & \textbf{5.5e-2} & 5.6e-2 & 4.9e-2 \\
\midrule
\multirow{4}{*}{$(3,3)$}
  & 0    & \textbf{1.1e-2} & 1.4e-2 & 3.2e-2 \\
  & 0.01 & \textbf{1.1e-2} & 1.5e-2 & 4.3e-2 \\
  & 0.1  & \textbf{1.4e-2} & 1.5e-2 & 4.6e-2 \\
  & 0.5  & \textbf{5.6e-2} & \textbf{5.6e-2} & 7.5e-2 \\
\bottomrule
\end{tabular}
\end{table}

Table~\ref{tab:noise} shows that RGD and BCD both converge to the noise floor: at $\sigma{=}0.01$ the recovery error matches $\sigma\sqrt{\dim(\cM)/(mn)} \approx 8.6{\times}10^{-4}$ for $(1,5)$, confirming that neither solver is impeded by the noisy initialization.
At high noise ($\sigma{=}0.5$), all solvers converge to similar error floors ($0.05$--$0.09$), since noise dominates the model mismatch.
AGD consistently trails at low noise levels (e.g., $4.3{\times}10^{-2}$ vs.\ $1.1{\times}10^{-2}$ for RGD at $(3,3)$, $\sigma{=}0.01$), again reflecting its lack of curvature adaptation.

\subsubsection{Summary}

\begin{figure*}[t]
\centering
\includegraphics[width=0.90\textwidth]{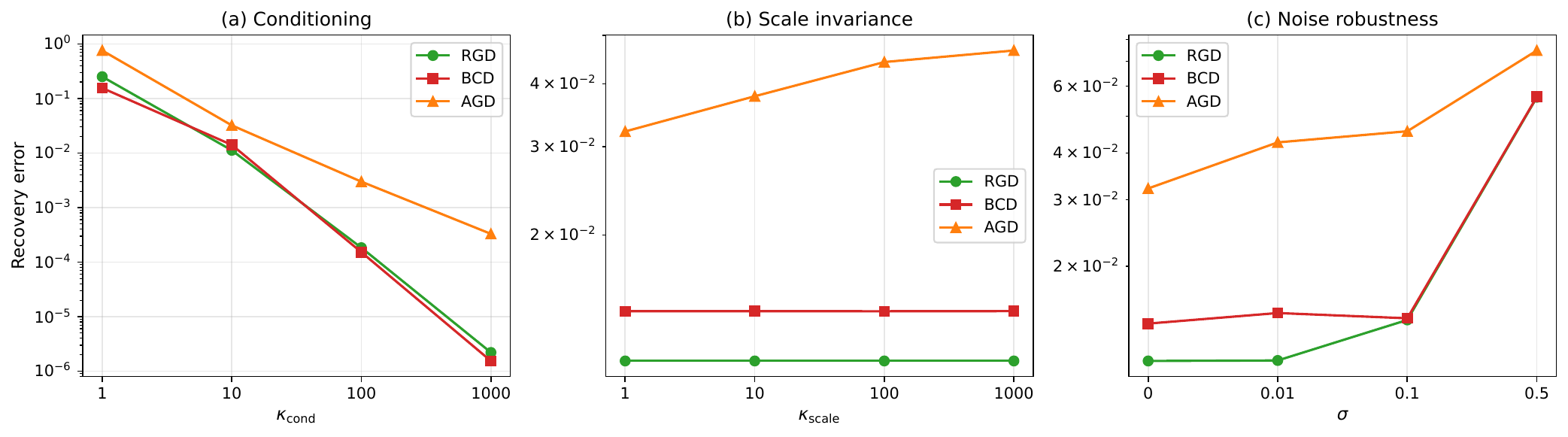}
\caption{Robustness at rank $(3,3)$ on $2{,}000{\times}1{,}000$ synthetic targets (mean over 3~seeds).
(a)~Conditioning: higher $\kappa_{\mathrm{cond}}$ helps all solvers; RGD and BCD are comparable.
(b)~Scale invariance: RGD is perfectly flat (Proposition~\ref{prop:gauge}); AGD degrades mildly.
(c)~Noise: RGD and BCD track the noise floor; AGD lags.
Errors at machine precision are clamped at $10^{-16}$ for the log scale.}
\label{fig:robustness_33}
\end{figure*}

Figure~\ref{fig:robustness_33} summarizes all three sweeps at rank $(3,3)$. They confirm that the S-block metric matches BCD's exact solves under conditioning, that RGD is invariant under cross-Hadamard scaling in both theory (Proposition~\ref{prop:gauge}) and practice, and that RGD tracks the noise floor while AGD lags.

\subsection{MovieLens-1M collaborative filtering}
\label{sec:exp_movielens}

To evaluate at a practical scale, we test on the \textbf{MovieLens-1M} benchmark~\cite{harper2015movielens}, which contains $6{,}040$ users, $3{,}706$ movies, and $1{,}000{,}000$ ratings (4.5\% density).
The objective is the regularized masked least-squares cost:
\begin{equation*}
  \min_{G,H,U,V}\; \tfrac{1}{2}\|\Omega\Had(A - W)\|_F^2 + \tfrac{\lambda}{2}\|W\|_F^2,
\end{equation*}
where $\|W\|_F^2$ can be computed cheaply (cost $O((m{+}n)r^4)$).
BCD~\cite{wertz2024bcd} is originally designed for the full decomposition, so we adapt it to completion by restricting its per-row normal equations to $\Omega$, while AGD~\cite{ciaperoni2024hadamard} applies directly as a smooth masked loss. We report held-out RMSE over rank pairs at fixed budget $r_1{+}r_2{=}6$ with varying interaction capacity $r_1 r_2\in\{5,8,9\}$, namely near-vanilla $(1,5)$, asymmetric $(2,4)$, and balanced $(3,3)$, at training fractions $80\%$ and $50\%$ over 3~seeds.

\begin{table}[t]
\centering
\caption{MovieLens-1M ($r_1{+}r_2{=}6$): held-out RMSE.
Mean over 3 seeds.  Best per row in bold.}
\label{tab:budget}
\small
\begin{tabular}{@{}c c c c c c@{}}
\toprule
$(r_1,r_2)$ & $r_1 r_2$ & $p_{\mathrm{tr}}$ & BCD & AGD & RGD \\
\midrule
\multirow{2}{*}{$(1,5)$} & \multirow{2}{*}{5}
  & 0.8 & \BMorig{\textbf{0.88}} & 2.96 & \textbf{0.88} \\
  & & 0.5 & \BMorig{\textbf{0.91}} & 5.11 & \textbf{0.91} \\
\midrule
\multirow{2}{*}{$(2,4)$} & \multirow{2}{*}{8}
  & 0.8 & 0.89 & 2.95 & \textbf{0.88} \\
  & & 0.5 & 0.94 & 4.56 & \textbf{0.91} \\
\midrule
\multirow{2}{*}{$(3,3)$} & \multirow{2}{*}{9}
  & 0.8 & 0.90 & 2.98 & \textbf{0.87} \\
  & & 0.5 & 0.95 & 3.51 & \textbf{0.91} \\
\bottomrule
\end{tabular}
\end{table}

Table~\ref{tab:budget} shows that RGD attains the best RMSE at every configuration. At the fixed budget $r_1{+}r_2{=}6$ the balanced split $(3,3)$ reaches $0.87$ against $(1,5)$'s $0.88$, so the gain comes from the interaction capacity $r_1 r_2$ rather than the parameter count. BCD matches RGD at the near-vanilla $(1,5)$ but trails at genuinely Hadamard ranks ($0.90$ against $0.87$ at $(3,3)$, $p_{\mathrm{tr}}{=}0.8$), as the convergence panel in Figure~\ref{fig:convergence} confirms, whereas AGD never reaches competitive RMSE (2.95--5.11).

\begin{figure}[t]
\centering
\includegraphics[width=\columnwidth]{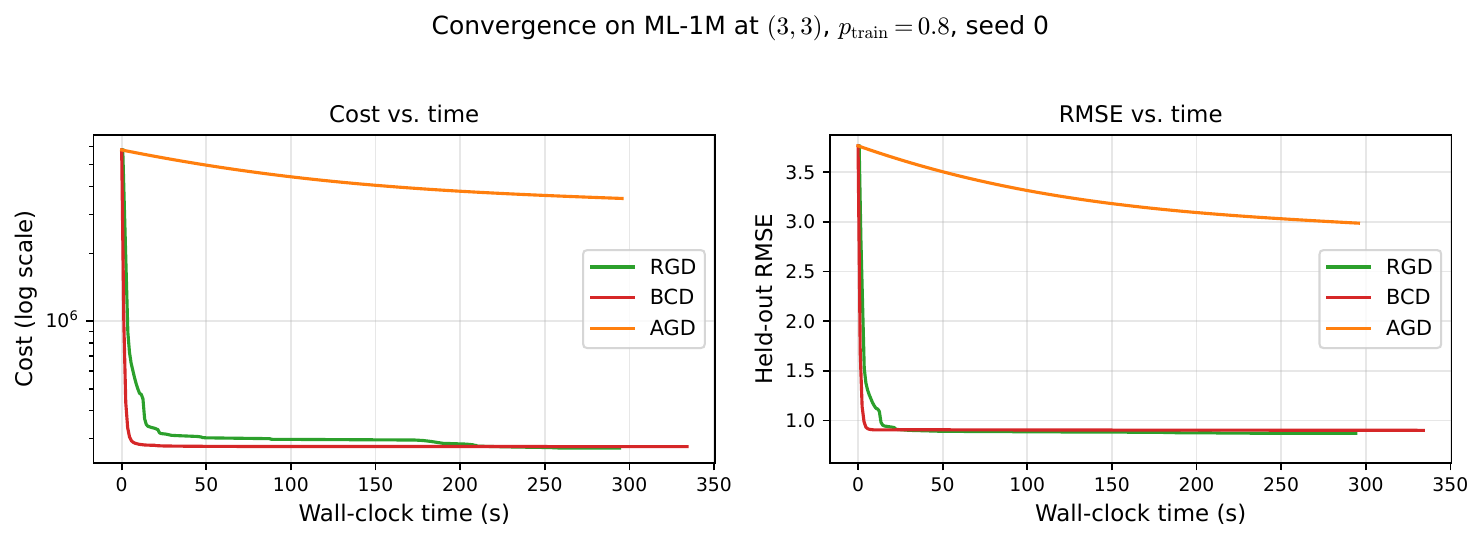}
\caption{Convergence on ML-1M at $(3,3)$, $p_{\mathrm{train}}{=}0.8$, seed~0.  Left: cost vs.\ wall-clock time (log scale).  Right: held-out RMSE vs.\ time.  RGD converges fastest and reaches the lowest RMSE.}
\label{fig:convergence}
\end{figure}


\subsection{Beyond least squares: logistic link prediction}
\label{sec:exp_logistic}

Adapting our framework to a non-quadratic loss only requires changing the gradient $E = \nabla_W f$ and leaves the metric, the weight blocks, and the retraction untouched. This is where the framework separates most clearly from BCD, which solves per-row least-squares normal equations in closed form. 

We model a binary matrix $A\in\{0,1\}^{m\times n}$ through the logistic link and minimize the regularized cross-entropy over an observed set $\Omega$,
\begin{equation}
  f(W)=\sum_{(i,j)\in\Omega}\big[\log(1+e^{W_{ij}})-A_{ij}W_{ij}\big]+\tfrac{\lambda}{2}\|W\|_F^2,
  \label{eq:logistic}
\end{equation}
with $W=(GH^\top)\Had(UV^\top)$ and $\sigma(z)=1/(1+e^{-z})$. The $W$-space gradient of~\eqref{eq:logistic} is $E_{ij}=\sigma(W_{ij})-A_{ij}$ on $\Omega$ and zero elsewhere, plus the regularizer term $\lambda W$, and it enters the factor gradients through~\eqref{eq:egrad} with no other change.

We first evaluate on real data. On MovieLens-1M we binarize the ratings into a ``like'' indicator ($A_{ij}{=}1$ iff the rating is at least $4$, a positive fraction of $0.58$) and predict held-out likes at the same fixed parameter budget $r_1{+}r_2{=}6$ used for completion. We train on a fraction $p_{\mathrm{train}}$ of the observed ratings and report held-out AUC-ROC on the remainder. Both solvers share the SVD-based initialization of Section~\ref{sec:experiments}, and AGD~\cite{ciaperoni2024hadamard} receives the same wall-clock budget as RGD's $300$ iterations. Table~\ref{tab:logistic_ml1m} reports the mean and standard deviation over three seeds.

RGD improves held-out AUC over AGD by $5$ to $7$ points in every configuration, with standard deviation at most $0.002$, and the margin widens as the ranks become balanced ($(1,5)\!\to\!(3,3)$) and as the training fraction grows, exactly the regime in which the per-row Hadamard preconditioning is most valuable. BCD does not apply, because the logistic loss admits no closed-form per-row least-squares solution. Figure~\ref{fig:logistic_conv} traces the corresponding convergence on the $(3,3)$, $p_{\mathrm{train}}{=}0.8$ configuration: within the shared wall-clock budget RGD drives the objective lower and climbs to a higher held-out AUC, whereas the unpreconditioned AGD plateaus early.

\begin{table}[t]
\centering
\small
\caption{Logistic ``like'' prediction on MovieLens-1M ($6{,}040\times3{,}706$, ratings binarized at ${\geq}4$, positive fraction $0.58$), at fixed budget $r_1{+}r_2{=}6$ and training fraction $p_{\mathrm{train}}$. Held-out AUC-ROC, mean $\pm$ standard deviation over three seeds, higher is better. Best per row in bold. BCD~\cite{wertz2024bcd} is inapplicable to the non-quadratic loss.}
\label{tab:logistic_ml1m}
\setlength{\tabcolsep}{4pt}
\begin{tabular}{@{}llccc@{}}
\toprule
$(r_1,r_2)$ & $p_{\mathrm{train}}$ & RGD (ours) & AGD & BCD \\
\midrule
\multirow{2}{*}{$(1,5)$} & $0.5$ & $\mathbf{0.792}{\scriptstyle\,\pm0.001}$ & $0.745{\scriptstyle\,\pm0.002}$ & n/a \\
                         & $0.8$ & $\mathbf{0.804}{\scriptstyle\,\pm0.001}$ & $0.748{\scriptstyle\,\pm0.001}$ & n/a \\
\multirow{2}{*}{$(2,4)$} & $0.5$ & $\mathbf{0.788}{\scriptstyle\,\pm0.000}$ & $0.732{\scriptstyle\,\pm0.001}$ & n/a \\
                         & $0.8$ & $\mathbf{0.800}{\scriptstyle\,\pm0.000}$ & $0.733{\scriptstyle\,\pm0.001}$ & n/a \\
\multirow{2}{*}{$(3,3)$} & $0.5$ & $\mathbf{0.783}{\scriptstyle\,\pm0.002}$ & $0.726{\scriptstyle\,\pm0.001}$ & n/a \\
                         & $0.8$ & $\mathbf{0.798}{\scriptstyle\,\pm0.001}$ & $0.725{\scriptstyle\,\pm0.001}$ & n/a \\
\bottomrule
\end{tabular}
\end{table}

\begin{figure}[t]
\centering
\includegraphics[width=\columnwidth]{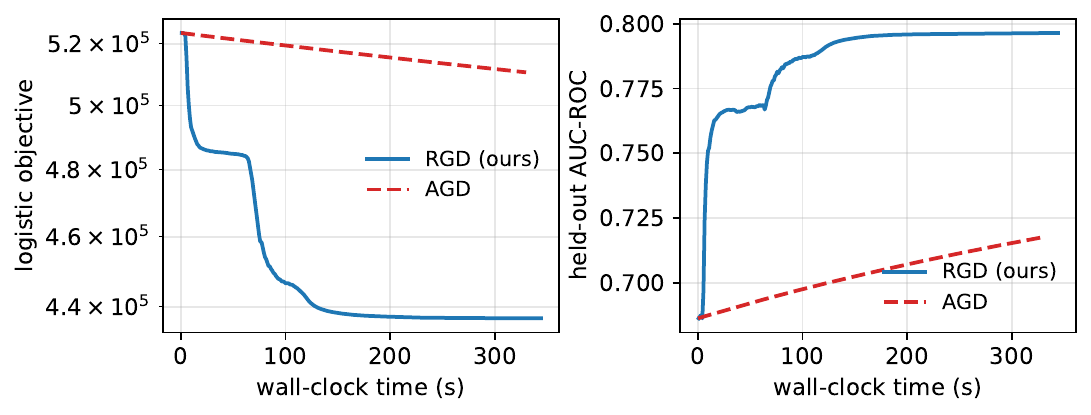}
\caption{Convergence on MovieLens-1M logistic ``like'' prediction at $(r_1,r_2){=}(3,3)$, $p_{\mathrm{train}}{=}0.8$, seed~$0$.  Left: logistic objective vs.\ wall-clock time (log scale).  Right: held-out AUC-ROC vs.\ time.  RGD (ours) and the time-matched AGD start from the same initialization; RGD converges faster and reaches a higher held-out AUC.}
\label{fig:logistic_conv}
\end{figure}

\subsection{Comparison with the concurrent second-order method}
\label{sec:exp_secondorder}

We compare our second-order method (RTR) against the second-order Manopt method of \citet{gillis2026manifold} on their own least-squares benchmarks: the Camera image (their Section~5.2) at $r\in\{8,16\}$, and $400\times400$ matrices with uniformly random entries (their Section~5.1) at $r=10$ over three seeds. Both methods start from the same face-splitting initialization and run for the same number of iterations. Table~\ref{tab:gillis_ls} reports the relative Frobenius error. Across every benchmark the two methods agree to within a few tenths of a percentage point. Figure~\ref{fig:rtr_gillis} shows that they follow essentially the same trajectory and reach the same plateau. Neither reaches the gradient tolerance from a cold start. The error decreases slowly while the gradient norm plateaus and the trust radius freezes, so the two methods meet at the plateau and the initialization matters more than the optimizer. This behavior is a property of the ill-conditioned Hadamard least-squares landscape and is shared by both methods.

\begin{table}[t]
\centering
\small
\caption{RTR (ours) against the second-order method of \citet{gillis2026manifold} on their own least-squares benchmarks, from a matched face-splitting initialization and equal per-benchmark iteration budgets. Relative Frobenius error in percent, mean over three seeds for the synthetic case.}
\label{tab:gillis_ls}
\begin{tabular}{@{}lcc@{}}
\toprule
Benchmark & RTR (ours) & Gillis et al. \\
\midrule
Camera, $r=8$   & $7.28$ & $7.36$ \\
Camera, $r=16$  & $4.57$ & $4.50$ \\
Synthetic $400\times400$, $r=10$ & $44.94\pm0.02$ & $45.21\pm0.07$ \\
\bottomrule
\end{tabular}
\end{table}

\begin{figure*}[t]
\centering
\includegraphics[width=0.82\textwidth]{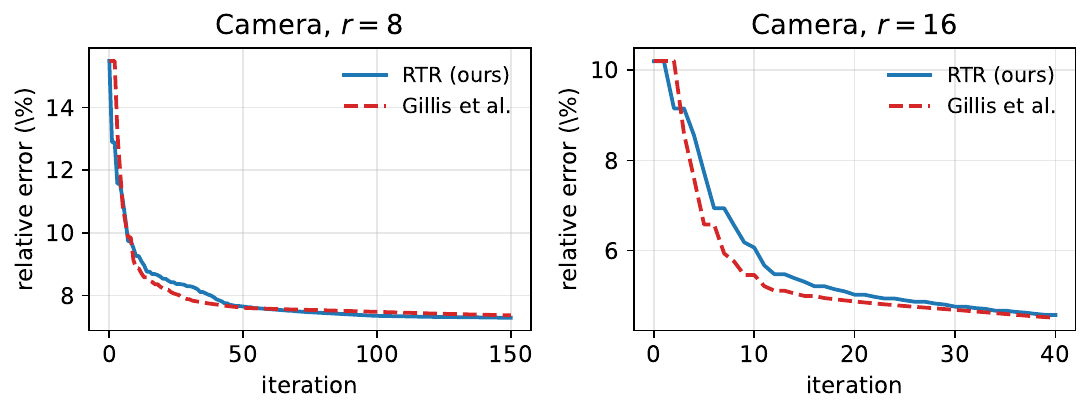}
\caption{RTR (ours) and the second-order method of \citet{gillis2026manifold} on the Camera image at $r=8$ and $r=16$, from the same face-splitting initialization. The two methods track each other throughout and reach the same plateau.}
\label{fig:rtr_gillis}
\end{figure*}


\section{Conclusion}
\label{sec:conclusion}

We cast the learning of Hadamard products of low-rank matrices as optimization on a Riemannian quotient manifold that removes both the general-linear reparameterization and the cross-Hadamard row and column scaling symmetries. On this manifold we proposed a block-diagonal metric taken from the block-diagonal part of the pullback of the Frobenius inner product, and showed that its weight blocks coincide with the per-row normal-equation matrices of BCD. Because the metric depends only on the factors and not on the loss, the resulting Riemannian gradient descent reuses BCD's per-row preconditioning for any smooth objective, at an $O(Nr + (m{+}n)r^4)$ per-iteration cost and with a tuning-free Gauss--Newton Cauchy step. Empirically, RGD tracks BCD on least-squares approximation across three real datasets, attains the lowest (or tied-lowest) RMSE at every tested Hadamard rank on MovieLens-1M, and exhibits the exact invariance to cross-Hadamard scaling predicted by the theory.

The same construction reaches well beyond least squares. It optimizes a non-quadratic logistic link-prediction objective on which BCD is inapplicable (Section~\ref{sec:exp_logistic}), and the quotient geometry is rich enough to admit a closed-form Levi--Civita connection and the Riemannian Hessian, yielding a second-order trust-region method that matches the concurrent method of \citet{gillis2026manifold} on its own benchmarks (Sections~\ref{sec:rtr} and~\ref{sec:exp_secondorder}). It also extends to products of $k$ low-rank factors, where each S-block keeps its $r\times r$ form with the complementary factor replaced by the elementwise product of the remaining $k{-}1$ factors, and to other Riemannian solvers such as conjugate gradients or trust regions~\cite{absilbook, ciaperoni2024hadamard, wertz2024bcd}. We view the identification of the cross-Hadamard scaling symmetry, together with a metric that is invariant under it, as the step that turns BCD's loss-specific preconditioning into a loss-agnostic Riemannian framework.

\bibliographystyle{IEEEtranN}
\bibliography{references}

\clearpage
\useRomanappendicesfalse
\appendices
\section*{Appendix Contents}
{\small
\noindent\ref{app:proofs}\quad Proofs\dotfill\pageref{app:proofs}\par
\noindent\ref{app:related}\quad Related Work Details\dotfill\pageref{app:related}\par
\noindent\ref{app:expdetails}\quad Experimental Details\dotfill\pageref{app:expdetails}\par
\noindent\ref{app:secondorder}\quad Riemannian Trust-Region Method: Derivations\dotfill\pageref{app:secondorder}\par
\noindent\ref{app:gillis}\quad The Concurrent Method of \citet{gillis2026manifold}: Positioning and Cost\dotfill\pageref{app:gillis}\par
}
\bigskip

\section{Proofs}
\label{app:proofs}
This appendix collects the proofs deferred from the main text. Throughout we work on the regular subset $\cM_*$ of Definition~\ref{def:metric} (full-column-rank factors with $X=GH^\top$ and $Y=UV^\top$ having no zero entries), on which every S-block~\eqref{eq:sblocks_GH}--\eqref{eq:sblocks_UV} is positive definite.

\begin{proof}[Proof of Proposition~\ref{prop:metric} (positive definiteness)]
Bilinearity and symmetry are immediate: each term is bilinear in $(\eta,\zeta)$ and each $S$-block is symmetric ($S = M^\top D M$ with $D$ diagonal).
For positive definiteness, suppose $\eta \neq 0$. Then at least one factor, say $\eta_G$, has a nonzero row $\eta_{G,i:}$.
Since $S_i^G = H^\top\diag(Y_{i,:}^2)H$ with $H$ full column rank and $Y_{i,:}^2$ entry-wise positive (on the open dense subset where $Y$ has no zero entries), $S_i^G$ is positive definite, so $\eta_{G,i:}S_i^G\eta_{G,i:}^\top > 0$.
All other terms are non-negative, giving $g_x(\eta,\eta) > 0$.
\end{proof}

\begin{proof}[Proof of Proposition~\ref{prop:gauge} ($\cG$-invariance)]
(a) Under $(\tilde G,\tilde H) = (GQ_1, HQ_1^{-\top})$, the weight block transforms as $S_i^{\tilde G} = Q_1^{-1}S_i^G Q_1^{-\top}$ and the tangent row as $\tilde\eta_{G,i:} = \eta_{G,i:}Q_1$.
The $G$-block becomes $\sum_i (\eta_{G,i:}Q_1)(Q_1^{-1}S_i^G Q_1^{-\top})(Q_1^\top\zeta_{G,i:}^\top) = \sum_i \eta_{G,i:}S_i^G\zeta_{G,i:}^\top$.
Analogous cancellations hold for $H$, $U$, $V$.
(b) Under the scaling, $S_i^{\tilde G} = d_{r,i}^{-2} S_i^G$ while $\tilde\eta_{G,i:} = d_{r,i}\,\eta_{G,i:}$.
The factors $d_{r,i}^2 \cdot d_{r,i}^{-2} = 1$ cancel, leaving the $G$-block invariant. The same holds for $H$, $U$, $V$.
\end{proof}

\section{Related Work Details}
\label{app:related}
This appendix expands the discussion of Section~\ref{sec:related}.

\paragraph{Fixed-rank quotient geometry.}
The set of $m\times n$ matrices of rank $r$ is a smooth manifold that admits several equivalent representations. Embedded formulations optimize directly on the rank-$r$ variety~\cite{vandereycken2013low}, whereas balanced factorizations $W = LR^\top$ introduce the general-linear reparameterization symmetry $(L,R)\mapsto(LQ, RQ^{-\top})$ and are naturally treated as a quotient $(\R_*^{m\times r}\times\R_*^{n\times r})/\mathrm{GL}(r)$~\cite{mishra2014r3mc,mishra2012riemannian}. In both cases the choice of Riemannian metric is the central design decision, because it fixes the search directions and the conditioning of the resulting optimization problem.

\paragraph{Riemannian preconditioning.}
Mishra and Sepulchre~\cite{mishra2016riemannian} formalize the view that a Riemannian metric on a quotient manifold acts as a preconditioner: metrics built from the block-diagonal part of the (Gauss--Newton) Hessian, or from normal-equation matrices, accelerate convergence relative to the Euclidean metric while keeping the per-iteration cost low. Our metric~\eqref{eq:metric} instantiates this principle for the Hadamard model: it retains the block-diagonal part of the pullback of the Frobenius inner product and discards the cross-factor terms, yielding independent $r\times r$ solves per row/column.

\paragraph{Hadamard low-rank models and solvers.}
The Hadamard product of low-rank factors models multiplicative structure that a single low-rank factor cannot capture at the same parameter budget, and arises in recommender systems~\cite{koren2009matrix}, link prediction with node-level heterogeneity~\cite{hoff2002latent}, count-data and single-cell models~\cite{townes2019glmpca}, and topic models and NMF~\cite{lee1999nmf}. Parameter-efficient adaptation~\cite{hu2022lora} is motivated by the same compactness. \citet{ciaperoni2024hadamard} introduced the Hadamard decomposition problem and an alternating gradient descent (AGD) solver applicable to any smooth loss but using uniform Euclidean steps. \citet{wertz2024bcd} proposed a block coordinate descent (BCD) that solves per-row least-squares normal equations in closed form and is accelerated with heavy-ball momentum~\cite{polyak1964heavy}, but is tied to the least-squares loss. Our approach recovers BCD's per-row preconditioning as a Riemannian metric that applies to any smooth loss.

\paragraph{Software.}
General-purpose Riemannian solvers and the manifolds used here are provided by Manopt~\cite{manopt}, Pymanopt~\cite{pymanopt}, and Manopt.jl~\cite{manoptjl}. We refer to~\cite{absilbook,boumal2023introduction} for the underlying theory of optimization on manifolds.

\section{Experimental Details}
\label{app:expdetails}
This appendix provides the dataset, solver, and protocol details used in Section~\ref{sec:experiments}.

\subsection{Datasets}
\begin{itemize}[nosep]
  \item \textbf{Camera} ($256{\times}256$): the standard grayscale ``cameraman'' image, resized to $256{\times}256$ with anti-aliasing.
  \item \textbf{Football} ($115{\times}115$): the college-football network adjacency matrix~\cite{girvan2002community}.
  \item \textbf{Les Mis\'{e}rables} ($77{\times}77$): the co-appearance network of characters~\cite{knuth1993stanford}.
  \item \textbf{Synthetic} ($2{,}000{\times}1{,}000$): ground-truth targets $A^\star = (G^\star H^{\star\top})\Had(U^\star V^{\star\top})$ with standard-normal factors, optionally reconditioned, rescaled, or corrupted by Gaussian noise for the robustness sweeps.
  \item \textbf{MovieLens-1M} ($6{,}040$ users $\times$ $3{,}706$ movies, $\sim\!10^6$ ratings, $4.5\%$ density)~\cite{harper2015movielens}, used for masked matrix completion.
\end{itemize}

\subsection{Solvers and Hyperparameters}
\begin{itemize}[nosep]
  \item \textbf{RGD} (ours): a Gauss--Newton Cauchy initial step~\eqref{eq:cauchy} followed by Armijo back-tracking with sufficient-decrease constant $c_1 = 10^{-4}$, contraction factor $\tau = 0.5$, and at most $25$ back-tracking steps. We run RGD for $300$ iterations or until the relative Riemannian gradient norm falls below $\mathrm{tol} = 10^{-8}$.
  \item \textbf{BCD}~\cite{wertz2024bcd}: exact per-row normal-equation solves with heavy-ball momentum ($\beta = 0.8$), run for $300$ iterations. Applicable to the least-squares loss only.
  \item \textbf{AGD}~\cite{ciaperoni2024hadamard}: normalized Euclidean gradient steps, run for as many iterations as fit within RGD's measured wall-clock time (fair time-matched comparison).
\end{itemize}

\subsection{Initialization}
All solvers share the SVD-based initialization of~\cite{wertz2024bcd}: $(G_0,H_0)$ are taken from the leading singular vectors of $\sqrt{|\Omega\Had A|}$ and $(U_0,V_0)$ from those of $\sqrt{|\Omega\Had A|}\cdot\mathrm{sign}(\Omega\Had A)$, followed by a global magnitude scaling. For RGD, the factors are additionally rotated by random orthogonal matrices to avoid degenerate initial S-blocks.

\subsection{Regularization and Protocol}
We grid-search the regularization weight $\lambda$ over $\{10^{-6},10^{-5},10^{-4},10^{-3},10^{-2}\}$ using a short RGD run on seed~$0$ and then fix it across all solvers and seeds. The full-matrix-approximation experiments use $\lambda = 10^{-4}$. We average every configuration over $3$ random seeds ($0,1,2$). All timings are wall-clock on a single workstation with $16$ logical CPU cores (AMD Ryzen), using NumPy/SciPy/Pymanopt~\cite{pymanopt} with default multi-threaded BLAS and no GPU. Because all solvers share this backend, their relative timings are directly comparable, whereas absolute times and any hardware-dependent crossovers (for instance from BLAS threading or cache effects) will vary across machines.

\subsection{Robustness Sweeps}
The synthetic stress tests vary one axis at a time (factor conditioning $\kappa_{\mathrm{cond}}$, diagonal scale separation $\kappa_{\mathrm{scale}}$, and additive noise $\sigma$) while holding the others at their defaults, over three rank pairs at the fixed budget $r_1{+}r_2 = 6$, namely $(1,5)$, $(2,4)$, and $(3,3)$, each over $3$ seeds. The precise constructions of the three axes are given in Section~\ref{sec:exp_robustness}.

\subsection{Metrics}
We report relative error $\|W-A\|_F/\|A\|_F$ for full-matrix approximation, held-out RMSE for MovieLens completion, and recovery error $\|W-A^\star\|_F/\|A^\star\|_F$ for the synthetic recovery tasks.

\section{Riemannian Trust-Region Method: Derivations}
\label{app:secondorder}
The main body builds RGD from three geometric objects, the metric~\eqref{eq:metric}, the Riemannian gradient~\eqref{eq:rgrad_G}--\eqref{eq:rgrad_H}, and the retraction~\eqref{eq:retraction}. A Riemannian trust-region method (RTR), which minimizes a quadratic model inside a truncated conjugate-gradient (Steihaug--Toint) subproblem (for details refer tto \citep{absilbook,boumal2023introduction}), needs three ingredients that RGD does not:
\begin{enumerate}[nosep,label=(\roman*)]
  \item the horizontal projection $P_x^{\cH}$ onto $\cH_x$, because the Hessian must map the horizontal space to itself and the inner solver operates within it,
  \item the ambient Levi--Civita connection $\nabla$ of the metric~\eqref{eq:metric}, and
  \item the Riemannian Hessian, assembled from the connection and the Euclidean Hessian of the loss.
\end{enumerate}
We derive each below and state its computation cost. All three depend only on the metric, and the Euclidean Hessian is the only cost-dependent quantity, so RTR applies to any smooth loss $f$.

\subsection{Ingredient 1: the horizontal projection}
We have the following expression for the horizontal projection operation.

\begin{proposition}[Horizontal projection]\label{prop:hproj}
Let $\cV_x$ be the vertical space of Section~\ref{sec:geometry}, and let $P_x^{\cH}=\mathrm{Id}-P_x^{\cV}$ where $P_x^{\cV}$ is the $g_x$-orthogonal projection onto $\cV_x$. For an ambient vector $Z$, the vertical part $v=P_x^{\cV}Z$ is the unique $v\in\cV_x$ satisfying $g_x(Z-v,w)=0$ for all $w\in\cV_x$. Parameterizing $v$ by $(\Lambda_1,\Lambda_2,\delta_r,\delta_c)$ as in $\cV_x$, the row and column scalings $\delta_r,\delta_c$ solve in closed form given $(\Lambda_1,\Lambda_2)$, leaving a coupled system in $(\Lambda_1,\Lambda_2)$ of size $r_1^2+r_2^2$ that is symmetric and positive semidefinite in $g_x$.
\end{proposition}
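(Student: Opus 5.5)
The proof is a Galerkin (normal-equations) argument on the finite-dimensional subspace $\cV_x$, followed by elimination of the diagonal-scaling parameters. Since $g_x$ is positive definite on $\cM_*$ (Proposition~\ref{prop:metric}(a)), the restriction of $g_x$ to $\cV_x$ is an inner product. The $g_x$-orthogonal projection onto $\cV_x$ therefore exists and is unique. It is characterized by the stated condition $g_x(Z-v,w)=0$ for all $w\in\cV_x$: existence and uniqueness follow from the Riesz representation on $(\cV_x,g_x)$. Hence $P_x^{\cH}=\mathrm{Id}-P_x^{\cV}$ is the $g_x$-orthogonal projection onto $\cH_x=\cV_x^{\perp_g}$.

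Next we parameterize. Let $L:(\Lambda_1,\Lambda_2,\delta_r,\delta_c)\mapsto v$ be the linear map from the definition of $\cV_x$. The projection conditions become the normal equations $L^{*}g_x L\,\theta=L^{*}g_x Z$, where $L^*$ is the adjoint with respect to the Euclidean inner product on parameters. The operator $A=L^*g_xL$ is symmetric positive semidefinite, since $\theta^\top A\theta=g_x(L\theta,L\theta)\ge0$. It is only semidefinite, not definite, because $L$ has the one-dimensional kernel coming from the stabilizer: $\Lambda_1=\lambda I$, $\Lambda_2=-\lambda I$, $\delta_r=-\lambda\mathbf 1$, $\delta_c=\lambda\mathbf 1$ gives $v=0$. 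Thus $\theta$ is unique only up to this kernel, but $v=L\theta$ is unique. I would check this kernel explicitly to justify the word ``semidefinite''.

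We then make the $(\delta_r,\delta_c)$ block explicit using the row-block structure of~\eqref{eq:metric}. The $\delta_r$-component enters only rows $i$ of the $G$ and $U$ blocks, as $\delta_{r,i}G_{i:}$ and $-\delta_{r,i}U_{i:}$. Hence the $\delta_r$–$\delta_r$ block is diagonal with entries
\[
a_i=G_{i:}S_i^GG_{i:}^\top+U_{i:}S_i^UU_{i:}^\top=2\sum_j X_{ij}^2Y_{ij}^2 .
\]
By the compact trace form, each term equals $\sum_jW_{ij}^2$, and $a_i>0$ on $\cM_*$. Similarly, the $\delta_c$–$\delta_c$ block is diagonal with entries $2\sum_iW_{ij}^2$.

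The main obstacle is the $\delta_r$–$\delta_c$ coupling block. Its entries have the form
\[
g_x\bigl((\diag(e_i)G,0,-\diag(e_i)U,0),(0,\diag(e_j)H,0,-\diag(e_j)V)\bigr),
\]
which vanishes because the two vectors have disjoint factor supports: the first lives in $G,U$ and the second in $H,V$. So the $(\delta_r,\delta_c)$ block is diagonal. It is invertible on $\cM_*$ and can be inverted entrywise.

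Given $(\Lambda_1,\Lambda_2)$, the equations for $\delta_{r,i}$ and $\delta_{c,j}$ are therefore scalar and give closed forms such as
\[
\delta_{r,i}=\bigl(g\text{-pairing of the row-}i\text{ residual }Z-L(\Lambda_1,\Lambda_2,0,0)\text{ with }(G_{i:},-U_{i:})\bigr)/a_i .
\]
Eliminating $\delta$ gives a Schur complement $A_{\Lambda\Lambda}-A_{\Lambda\delta}A_{\delta\delta}^{-1}A_{\delta\Lambda}$ of size $r_1^2+r_2^2$. This Schur complement of a PSD matrix with a positive definite pivot block is again symmetric PSD. Equivalently, it is the Gram operator of $g_x$ restricted to vertical vectors $g_x$-orthogonal to the scaling directions. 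This establishes the claimed reduced system.
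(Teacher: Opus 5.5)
Your proposal is correct and follows essentially the same route as the paper. Both arguments write down the normal equations $g_x(Z-v,w)=0$ against the generators of $\cV_x$, eliminate $\delta_r,\delta_c$ as scalar equations using the row/column block-diagonal structure of the metric, and conclude that the Schur complement of the generators' Gram matrix is symmetric positive semidefinite, singular along the one-dimensional stabilizer.

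You are slightly more careful than the paper on two points it leaves implicit. First, you check that the $\delta_r$--$\delta_c$ cross block vanishes because the two generator families live on disjoint factors. Second, you show that the pivot entries $2\sum_j W_{ij}^2$ and $2\sum_i W_{ij}^2$ are positive on $\cM_*$. Together these are what actually justify the closed-form elimination.
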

\begin{proof}
The $g_x$-orthogonal projection of $Z$ onto the subspace $\cV_x$ is the unique $v\in\cV_x$ that minimizes $\|Z-v\|_{g_x}$, characterized by the normal equations $g_x(Z-v,w)=0$ for all $w\in\cV_x$. Testing against the four generator families of $\cV_x$ produces four groups of equations in $(\Lambda_1,\Lambda_2,\delta_r,\delta_c)$. By the block-diagonal form of~\eqref{eq:metric}, row $i$ of $G$ and $U$ enters the diagonal generators only through the single scalar $\delta_{r,i}$, and column $j$ of $H$ and $V$ only through $\delta_{c,j}$. The equations indexed by $\delta_r,\delta_c$ are therefore scalar and, for fixed $(\Lambda_1,\Lambda_2)$, solve in closed form as affine functions of $(\Lambda_1,\Lambda_2)$. Substituting these back into the equations indexed by $(\Lambda_1,\Lambda_2)$ leaves a linear system whose operator is the Schur complement, with respect to the diagonal block, of the Gram matrix of the generators in $g_x$. A Gram matrix is symmetric positive semidefinite, and its Schur complement inherits both properties. The kernel is the one-dimensional overlap of the reparameterization and scaling generators, along which conjugate gradient terminates harmlessly.
\end{proof}

We solve the reduced $(\Lambda_1,\Lambda_2)$ system matrix-free with conjugate gradient. Each application of its operator costs $O((m{+}n)r^2)$ and forms no $m\times n$ matrix, so the projection scales linearly in the ambient dimensions. The one-dimensional gauge redundancy between the reparameterization and scaling actions makes the operator singular but positive semidefinite, which conjugate gradient handles without modification.

\subsection{Ingredient 2: the Levi--Civita connection}
The connection on the quotient is the horizontal projection of the connection on the total space, $P_x^{\cH}\big(\nabla_\xi\eta\big)$ for horizontal $\xi$ and $\eta$~\cite{absilbook,boumal2023introduction}. It therefore suffices to compute the ambient connection $\nabla$ and to apply Ingredient~1. On the total space of factors, a vector space, $\nabla$ splits into the ordinary directional derivative and a Christoffel correction, given in closed form by Proposition~\ref{prop:conn}.

\begin{proposition}[Ambient connection]\label{prop:conn}
The Levi--Civita connection of the metric~\eqref{eq:metric}, that is the unique torsion-free and metric-compatible connection, is
\begin{equation*}
  \nabla_\xi\eta = D\eta[\xi] + \Gamma_x(\xi,\eta),
\end{equation*}
where the Christoffel term $\Gamma_x$, nonzero because the weight blocks~\eqref{eq:sblocks_GH}--\eqref{eq:sblocks_UV} depend on $x$, is the unique tensor satisfying the Koszul identity
\begin{align}
  2\,g_x\!\big(\Gamma_x(\xi,\eta),\zeta\big) &= (D_\xi g)(\eta,\zeta) + (D_\eta g)(\xi,\zeta) \notag\\
  &\quad - (D_\zeta g)(\xi,\eta)
  \label{eq:app_koszul}
\end{align}
for all $\zeta$. Equivalently, because $g_x$ is nondegenerate,
\begin{equation}
  \Gamma_x(\xi,\eta) = \tfrac12\,g^{-1}\big[(D_\xi g)\eta + (D_\eta g)\xi - c(\xi,\eta)\big],
  \label{eq:app_gamma}
\end{equation}
where $(D_\xi g)\eta$ is the covector with $b$-block $\eta_b\,(DS_b[\xi])$, and the reaction covector $c(\xi,\eta)$ represents $\zeta\mapsto(D_\zeta g)(\xi,\eta)$ and equals $\nabla_x\big[g_x(\xi,\eta)\big]$. Here $(D_\xi g)(\eta,\zeta)=\sum_{b}\eta_b\,(DS_b[\xi])\,\zeta_b^\top$ over the four block families $b\in\{(G,i),(H,j),(U,i),(V,j)\}$, and, using $\Delta X,\Delta Y$ from Section~\ref{sec:cauchy},
\begin{align}
  DS_i^G[\xi] &= H^\top\!\diag\!\big(2\,Y_{i,:}\Had\Delta Y[\xi]_{i,:}\big)H \notag\\
  &\quad + \xi_H^\top\!\diag(Y_{i,:}^2)\,H + H^\top\!\diag(Y_{i,:}^2)\,\xi_H,
  \label{eq:app_DS}
\end{align}
with $S_j^H$, $S_i^U$, $S_j^V$ obtained under $Y\!\leftrightarrow\!X$ and $(G,H)\!\leftrightarrow\!(U,V)$.
\end{proposition}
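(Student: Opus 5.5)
The total space is an open subset of the vector space $\R^{m\times r_1}\times\R^{n\times r_1}\times\R^{m\times r_2}\times\R^{n\times r_2}$, so I will apply the standard Koszul construction for a Riemannian metric on an open subset of a Euclidean space. The metric is $x\mapsto g_x$, a smooth field of positive-definite bilinear forms by Proposition~\ref{prop:metric}(a). The plan is as follows.

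\textbf{Step 1.} Write any candidate affine connection as $\nabla_\xi\eta = D\eta[\xi] + \Gamma_x(\xi,\eta)$, with $\Gamma_x$ bilinear. Torsion-freeness, $\nabla_\xi\eta-\nabla_\eta\xi=[\xi,\eta]=D\eta[\xi]-D\xi[\eta]$, is then equivalent to $\Gamma_x$ being symmetric.

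\textbf{Step 2.} Impose metric compatibility, $D_\zeta[g(\xi,\eta)] = g(\nabla_\zeta\xi,\eta)+g(\xi,\nabla_\zeta\eta)$. For vector fields, the product rule gives
\begin{equation*}
D_\zeta[g(\xi,\eta)] = (D_\zeta g)(\xi,\eta) + g(D\xi[\zeta],\eta) + g(\xi,D\eta[\zeta]).
\end{equation*}
So compatibility reduces to $(D_\zeta g)(\xi,\eta) = g(\Gamma(\zeta,\xi),\eta)+g(\xi,\Gamma(\zeta,\eta))$ for all constant $\xi,\eta,\zeta$.

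\textbf{Step 3.} Take the cyclic combination of this identity with arguments $(\xi,\eta,\zeta)$, $(\eta,\xi,\zeta)$ and $(\zeta,\xi,\eta)$: add the first two and subtract the third. Using the symmetries of $\Gamma$ and of $g$, this yields exactly \eqref{eq:app_koszul}. Conversely, the right-hand side of \eqref{eq:app_koszul} is a linear functional in $\zeta$ and symmetric in $(\xi,\eta)$, because $D_\xi g$ is a symmetric form. Since $g_x$ is nondegenerate on $\cM_*$, it therefore defines a unique $\Gamma_x(\xi,\eta)$, which is symmetric. Reversing the computation shows that this $\Gamma_x$ satisfies compatibility, which gives both existence and uniqueness. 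Applying $g^{-1}$ (Riesz representation) to \eqref{eq:app_koszul} gives \eqref{eq:app_gamma}. Here the covector $(D_\xi g)\eta$ represents $\zeta\mapsto(D_\xi g)(\eta,\zeta)$, and $c(\xi,\eta)$ represents $\zeta\mapsto (D_\zeta g)(\xi,\eta)$, which is the Euclidean gradient of $x\mapsto g_x(\xi,\eta)$ with $\xi,\eta$ frozen.

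\textbf{Step 4.} Compute the block derivatives, which is the only model-specific part. By \eqref{eq:metric},
\begin{equation*}
(D_\xi g)(\eta,\zeta)=\sum_b \eta_b\,DS_b[\xi]\,\zeta_b^\top,
\end{equation*}
so the remaining task is to differentiate $S_i^G=H^\top\diag(Y_{i,:}^2)H$.
\begin{itemize}
\item The $H$-dependence gives $\xi_H^\top\diag(Y_{i,:}^2)H + H^\top\diag(Y_{i,:}^2)\xi_H$.
\item The $Y$-dependence gives $H^\top\diag(2Y_{i,:}\Had DY[\xi]_{i,:})H$, where $DY[\xi]=\xi_UV^\top+U\xi_V^\top=\Delta Y[\xi]$.
\end{itemize}
This is \eqref{eq:app_DS}. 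The other three blocks follow by the stated symmetry $X\leftrightarrow Y$, $(G,H)\leftrightarrow(U,V)$, applied also to the transposes for the $H$ and $V$ blocks.

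The main obstacle is bookkeeping rather than any conceptual step: getting the signs and cyclic permutations right in Step 3, and making sure $c(\xi,\eta)$ is identified as a covector in the correct (Euclidean) pairing before $g^{-1}$ is applied. I will check this by verifying that the resulting $\Gamma$ is symmetric and satisfies the compatibility identity directly.
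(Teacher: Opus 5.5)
Your proposal is correct and follows essentially the same route as the paper. Both define $\Gamma_x$ through the Koszul identity, obtain torsion-freeness from its symmetry, verify metric compatibility by the same pairwise combination of Koszul identities, and obtain \eqref{eq:app_DS} by differentiating $S_i^G$ in $H$ and $Y$. The one difference is that you also derive the Koszul identity as a necessary condition via the cyclic combination, which proves uniqueness directly, whereas the paper cites the uniqueness of the Levi--Civita connection.
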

\begin{proof}
On a vector space the metric is a field of symmetric positive-definite operators, so $g_x$ is nondegenerate and~\eqref{eq:app_koszul} determines $\Gamma_x(\xi,\eta)$ uniquely for every $\xi,\eta$. The derivative of a symmetric bilinear form is symmetric, so $(D_\zeta g)(\xi,\eta)=(D_\zeta g)(\eta,\xi)$. Subtracting~\eqref{eq:app_koszul} from its image under $\xi\!\leftrightarrow\!\eta$ and using this symmetry gives $g_x(\Gamma_x(\xi,\eta)-\Gamma_x(\eta,\xi),\zeta)=0$ for all $\zeta$, so $\Gamma_x$ is symmetric and $\nabla$ is torsion-free. Adding~\eqref{eq:app_koszul} to its image under $\eta\!\leftrightarrow\!\zeta$ gives $g_x(\Gamma_x(\xi,\eta),\zeta)+g_x(\eta,\Gamma_x(\xi,\zeta))=(D_\xi g)(\eta,\zeta)$, which equals $\xi\big[g_x(\eta,\zeta)\big]$ for constant fields, so $\nabla$ is metric-compatible. A torsion-free, metric-compatible connection is the Levi--Civita connection and is unique. Finally,~\eqref{eq:app_DS} is the directional derivative of $S_i^G=H^\top\diag(Y_{i,:}^2)H$ in~\eqref{eq:sblocks_GH}, obtained by varying $Y$ through $\Delta Y[\xi]$ and $H$ through $\xi_H$, and $(D_\xi g)(\eta,\zeta)=\sum_b\eta_b(DS_b[\xi])\zeta_b^\top$ follows by differentiating~\eqref{eq:metric} term by term.
\end{proof}

Because $g_x$ is block-diagonal, applying $g^{-1}$ in~\eqref{eq:app_gamma} is the same batched $r\times r$ solve as the gradient~\eqref{eq:rgrad_G}, and the first two terms are contractions of~\eqref{eq:app_DS}. The reaction covector collects the dependence of every weight block on the differentiated factor. Its $G$-block is
\begin{align}
  c_G(\xi,\eta) &= \big(Y^2\Had(G\eta_H^\top)\big)\,\xi_H + \big(Y^2\Had(G\xi_H^\top)\big)\,\eta_H \notag\\
  &\quad + 2\,(X\Had M)\,H,
  \label{eq:app_c}
\end{align}
with $M = (\xi_U V^\top)\Had(\eta_U V^\top) + (U\xi_V^\top)\Had(U\eta_V^\top)$, and $c_H$, $c_U$, $c_V$ obtained under $Y\!\leftrightarrow\!X$ and $(G,H)\!\leftrightarrow\!(U,V)$. Every term evaluates through matrix products, so no dense connection tensor is ever stored.

\subsection{Ingredient 3: the Riemannian Hessian}
The Hessian is the covariant derivative of the gradient. Setting $\eta=\mathrm{grad}\,f$ from~\eqref{eq:rgrad_G}--\eqref{eq:rgrad_H} in Ingredient~2 gives
\begin{equation}
  \mathrm{Hess}\,f(x)[\xi] = P_x^{\cH}\big(\nabla_\xi\,\mathrm{grad}\,f\big) = P_x^{\cH}\big(D\eta[\xi] + \Gamma_x(\xi,\eta)\big),
  \label{eq:app_hess_split}
\end{equation}
which is valid because $f$ is constant on the fibers of the submersion~\cite{absilbook,boumal2023introduction}. Two quantities are therefore needed beyond the projection. The first is the Christoffel term $\Gamma_x(\xi,\eta)$ of~\eqref{eq:app_gamma}. The second is the directional derivative $D\eta[\xi]$ of the Riemannian gradient, and it reduces further. Writing $\eta=g^{-1}\nabla f$ and using $D(g^{-1})[\xi]=-g^{-1}(D_\xi g)g^{-1}$,
\begin{equation}
  D\eta[\xi] = g^{-1}\big(H_E[\xi] - (D_\xi g)\eta\big),
  \label{eq:app_dgrad}
\end{equation}
where $(D_\xi g)\eta$ is the contraction of~\eqref{eq:app_DS} that $\Gamma_x$ already requires, and $H_E[\xi]=D(\nabla f)[\xi]$ is the Euclidean Hessian-vector product, the directional derivative of the factor gradients~\eqref{eq:egrad}. With $\dot W = D\Phi_x[\xi]$ from~\eqref{eq:dphi} and $\dot E = \nabla^2_W f[\dot W]$,
\begin{align*}
  H_E[\xi] = \big(&\dot E_Y H + E_Y\xi_H,\;\ \dot E_Y^\top G + E_Y^\top\xi_G, \\
  &\dot E_X V + E_X\xi_V,\;\ \dot E_X^\top U + E_X^\top\xi_U\big),
\end{align*}
where $\dot E_Y = \dot E\Had Y + E\Had\Delta Y[\xi]$ and $\dot E_X = \dot E\Had X + E\Had\Delta X[\xi]$. Only $\dot E$, through the loss curvature $\nabla^2_W f$, depends on the loss. For least squares $\nabla^2_W f=\mathrm{Id}$, so $\dot E=\dot W$.

\begin{proposition}[Riemannian Hessian]\label{prop:hess}
With $\eta=\mathrm{grad}\,f$, the Riemannian Hessian on the quotient is
\begin{align}
  \mathrm{Hess}\,f(x)[\xi] &= P_x^{\cH}\,g^{-1}\!\Big(H_E[\xi] \notag\\
  &\quad + \tfrac12\big[(D_\eta g)\xi - (D_\xi g)\eta - c(\xi,\eta)\big]\Big),
  \label{eq:app_hess}
\end{align}
where $c(\xi,\eta)=\nabla_x\!\big[g_x(\xi,\eta)\big]$ is the reaction covector~\eqref{eq:app_c}.
\end{proposition}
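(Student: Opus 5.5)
The plan is to assemble~\eqref{eq:app_hess} from the split~\eqref{eq:app_hess_split}, substituting the two pieces already derived: the derivative of the gradient~\eqref{eq:app_dgrad} and the Christoffel term~\eqref{eq:app_gamma} of Proposition~\ref{prop:conn}.

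\textbf{Step 1: justify the split.} The total space is an open subset of a vector space with the $\cG$-invariant metric $g_x$ (Proposition~\ref{prop:gauge}). For the Riemannian submersion onto $\cM_*$, the standard result~\cite{absilbook,boumal2023introduction} says the quotient connection is $P_x^{\cH}(\nabla_\xi\eta)$ for horizontal lifts $\xi,\eta$. Since $f\circ\Phi$ is constant on fibers, its Riemannian gradient $\eta=g^{-1}\nabla f$ is the horizontal lift of the quotient gradient, as noted after~\eqref{eq:rgrad_H}. Hence the lifted Hessian is $P_x^{\cH}\big(D\eta[\xi]+\Gamma_x(\xi,\eta)\big)$.

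\textbf{Step 2: differentiate the gradient.} Differentiate the identity $g_x(\eta)=\nabla f$, which holds as covectors. The product rule gives $(D_\xi g)\eta + g\,D\eta[\xi] = H_E[\xi]$, which is~\eqref{eq:app_dgrad}.

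\textbf{Step 3: add the Christoffel term and collect.} Adding~\eqref{eq:app_gamma} gives
\[
g^{-1}\Big(H_E[\xi] - (D_\xi g)\eta + \tfrac12(D_\xi g)\eta + \tfrac12(D_\eta g)\xi - \tfrac12 c(\xi,\eta)\Big),
\]
and the $(D_\xi g)\eta$ terms combine to $-\tfrac12(D_\xi g)\eta$. This is exactly the bracket in~\eqref{eq:app_hess}. Applying $P_x^{\cH}$ finishes the derivation.

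\textbf{Main obstacle: the reaction covector.} The delicate step is checking that $c(\xi,\eta)$, defined in Proposition~\ref{prop:conn} as the covector representing $\zeta\mapsto(D_\zeta g)(\xi,\eta)$, equals $\nabla_x[g_x(\xi,\eta)]$ with $\xi,\eta$ held fixed. This holds by the definition of the Euclidean gradient of the scalar function $x\mapsto g_x(\xi,\eta)$.

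To confirm the explicit form~\eqref{eq:app_c}, I would use the compact trace form of $g_x$. Each weight block depends on $x$ only through the complementary factors, for example $S^G$ through $H,U,V$. So the $G$-component of $\nabla_x g_x(\xi,\eta)$ gets contributions from:
\begin{itemize}
\item the $H$-term $\tr\big((\xi_HG^\top\Had Y^\top)^\top(\eta_HG^\top\Had Y^\top)\big)$, which is bilinear in $G$ and yields the two $Y^2\Had(\cdot)$ terms;
\item the $U$- and $V$-terms, which depend on $G$ through $X=GH^\top$ and yield $2(X\Had M)H$.
\end{itemize}

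A remaining subtlety is that $\eta=\mathrm{grad}\,f$ is horizontal but $D\eta[\xi]$ need not be. This is why the final projection $P_x^{\cH}$ (Proposition~\ref{prop:hproj}) is needed and cannot be dropped.
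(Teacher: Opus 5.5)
Your proposal is correct and matches the paper's proof: substitute \eqref{eq:app_dgrad} and \eqref{eq:app_gamma} into the split \eqref{eq:app_hess_split}, then combine $-(D_\xi g)\eta+\tfrac12(D_\xi g)\eta=-\tfrac12(D_\xi g)\eta$. Your extra steps are consistent with what the paper sets up in the surrounding text rather than inside the proof: the submersion justification of the split, obtaining \eqref{eq:app_dgrad} by differentiating $g_x\eta=\nabla f$ directly instead of through $D(g^{-1})$, and the check of the reaction covector \eqref{eq:app_c}.
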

\begin{proof}
Substituting~\eqref{eq:app_dgrad} and~\eqref{eq:app_gamma} into~\eqref{eq:app_hess_split}, the terms $-g^{-1}(D_\xi g)\eta$ and $\tfrac12 g^{-1}(D_\xi g)\eta$ combine to $-\tfrac12 g^{-1}(D_\xi g)\eta$, which gives~\eqref{eq:app_hess}.
\end{proof}

Applying $g^{-1}$ is the batched $r\times r$ solve of the gradient~\eqref{eq:rgrad_G}, the three metric-derivative terms are matrix-product contractions, and $P_x^{\cH}$ is Ingredient~1 (Proposition~\ref{prop:hproj}). 


\subsection{Trust-region subproblem}
At each iterate RTR minimizes the model $g_x(\mathrm{grad}\,f,\xi) + \tfrac12 g_x(\mathrm{Hess}\,f[\xi],\xi)$ over horizontal $\xi$ in the trust ball $\|\xi\|_x\le\Delta$ with truncated conjugate gradient, and updates the factors with the retraction~\eqref{eq:retraction}. We use the trust-region optimizer of Pymanopt~\cite{pymanopt}, supplying the metric~\eqref{eq:metric}, gradient~\eqref{eq:rgrad_G}--\eqref{eq:rgrad_H}, Hessian~\eqref{eq:app_hess}, projection (Proposition~\ref{prop:hproj}), and retraction~\eqref{eq:retraction}.



\section{Comparison with the Concurrent Method of \citet{gillis2026manifold}}
\label{app:gillis}
Concurrently with this work, \citet{gillis2026manifold} proposed Riemannian algorithms for the same Hadamard decomposition. We describe their method and contrast the resulting expressions and per-iteration cost with ours. The head-to-head least-squares comparison appears in Section~\ref{sec:exp_secondorder}. The two works share the problem but not the geometry, and they differ in the losses and data regimes they address.

\begin{table*}[t]
\centering
\small
\caption{Distinction between the concurrent method of \citet{gillis2026manifold} and the present work.}
\label{tab:gillis_distinction}
\begin{tabular}{@{}lll@{}}
\toprule
Aspect & \citet{gillis2026manifold} & This work \\
\midrule
Variables    & two dense low-rank matrices $X_1,X_2$ & four factors $G,H,U,V$ \\
Geometry     & embedded fixed-rank & quotient by GL and cross-scaling (Sec.~\ref{sec:geometry}) \\
Metric       & standard Euclidean & preconditioned block-diagonal~\eqref{eq:metric} \\
Gradient     & projected ambient $-R\Had X_2$ & per-row solve $(\nabla_G f)_{i:}(S_i^G)^{-1}$~\eqref{eq:rgrad_G} \\
Hessian      & projected ambient & folded connection $P_x^{\cH}g^{-1}(H_E+\tfrac12[\,\cdots])$~\eqref{eq:app_hess} \\
Loss         & least squares only & any smooth loss \\
Data regime  & full matrix (dense or sparse input) & full matrix and completion with missing entries \\
Order        & second only & first (RGD) and second (RTR) \\
\bottomrule
\end{tabular}
\end{table*}

\subsection{The Gillis baseline}
\citet{gillis2026manifold} optimize the two dense low-rank matrices $X_1$ and $X_2$ in the representation $X\approx X_1\Had X_2$, each on the embedded fixed-rank manifold with the standard Euclidean metric, under the least-squares objective $\tfrac12\|X-X_1\Had X_2\|_F^2$. Their second-order method, which they call ``Manopt'', runs a truncated conjugate-gradient trust region with the projected ambient gradient $\Pi_{X_1}({-}R\Had X_2)$, where $R=X-X_1\Had X_2$ is the residual, and the projected ambient Hessian
\begin{equation*}
  \mathcal{H}[dX_1] = \Pi_{X_1}\!\big(dX_1\Had X_2^2 + Y\Had dX_2\big), \quad Y = 2X_1\Had X_2 - X,
\end{equation*}
where $\Pi_{X_1}$ projects onto the fixed-rank tangent space at $X_1$ and the block for $X_2$ is symmetric. They initialize with a face-splitting (FS) scheme. For large \emph{sparse but fully observed} matrices, such as document term-frequency matrices, they use separate block-coordinate algorithms. Their formulation targets least squares only and does not address matrix completion with missing entries or non-quadratic losses.

\subsection{Distinction from our approach}
Table~\ref{tab:gillis_distinction} summarizes the differences. Where they optimize two dense matrices on the embedded fixed-rank geometry, we optimize the four factors $(G,H,U,V)$ on the quotient of Section~\ref{sec:geometry} under the preconditioned block-diagonal metric~\eqref{eq:metric}. The two constructions yield different gradients and Hessians. Their gradient is a projected ambient Hadamard product, whereas ours is a per-row solve $(\mathrm{grad}_G f)_{i:}=(\nabla_G f)_{i:}(S_i^G)^{-1}$~\eqref{eq:rgrad_G} against the weight blocks~\eqref{eq:sblocks_GH}--\eqref{eq:sblocks_UV}. Their Hessian is a projected ambient expression, whereas ours folds the ambient Levi--Civita connection into $P_x^{\cH}g^{-1}(H_E+\tfrac12[\,\cdots])$~\eqref{eq:app_hess}.

The two methods also have different per-iteration complexity. The gradient and Hessian of \citet{gillis2026manifold} reconstruct the dense $m\times n$ matrices at every evaluation, so their method costs $O(mnr)$ per step and cannot go below $O(mn)$ because the ambient gradient $-R\Had X_2$ requires the full residual $R=X-X_1\Had X_2$. Our per-iteration cost is $O(Nr + (m{+}n)r^4)$ with $N=|\Omega|$ the number of observed entries (Table~\ref{tab:complexity}): a data-fit term of the same $O(Nr)$ order plus a preconditioner term (the metric weight blocks~\eqref{eq:sblocks_GH}--\eqref{eq:sblocks_UV} and, for the Hessian, the horizontal projection of Proposition~\ref{prop:hproj}) that is independent of $N$. In the completion regime $N=|\Omega|\ll mn$ our data-fit term drops to $O(|\Omega|r)$, whereas their dense residual keeps their method at $O(mn)$ per step, which provides a speedup for our appoach.




\end{document}